\title{ 
Learning Domain-Independent Heuristics for Grounded and Lifted Planning
}
\author{ 
  Dillon Z. Chen,\textsuperscript{\rm 1,2}
  Sylvie Thi\'ebaux,\textsuperscript{\rm 1,2}
  Felipe Trevizan\textsuperscript{\rm 1}\\
} 
\DeclareMathOperator*{\aggr}{agg}
\DeclareMathOperator*{\comb}{cmb}
\DeclareMathOperator*{\pe}{IF}
\DeclareMathOperator*{\pre}{pre}
\DeclareMathOperator*{\add}{add}
\DeclareMathOperator*{\varval}{var:val}
\DeclareMathOperator*{\del}{del}
\DeclareMathOperator*{\eff}{eff}
\DeclareMathOperator*{\val}{val}
\DeclareMathOperator*{\goal}{goal}
\DeclareMathOperator*{\true}{true}
\DeclareMathOperator*{\ff}{FF}
\newclass{\N}{N}
\newclass{\coNTIME}{coNTIME}
\newclass{\coNSPACE}{coNSPACE}
\newclass{\coNPSPACE}{coNPSPACE}
\newclass{\EXPTIME}{EXPTIME}
\newclass{\NEXPTIME}{NEXPTIME}
\newclass{\coNEXPTIME}{coNEXPTIME}
\newclass{\NEXPSPACE}{NEXPSPACE}
\newclass{\coNEXPSPACE}{coNEXPSPACE}
\newclass{\ASPACE}{ASPACE}
\newclass{\ATIME}{ATIME}
\newclass{\APSPACE}{APSPACE}
\newclass{\AEXPTIME}{AEXPTIME}
\newclass{\AEXPSPACE}{AEXPSPACE}
\newcolumntype{Y}{>{\centering\arraybackslash}X}
\newcommand{\predicates}{\mathcal{P}}
\newcommand{\objects}{\mathcal{O}}
\newcommand{\actionschema}{\mathcal{A}}
\newcommand{\probstrips}{\ensuremath\gen{P, A, s_0, G}}
\newcommand{\probfdr}{\ensuremath\gen{\mathcal{V}, A, s_0, s_\star}}
\newcommand{\problifted}{\ensuremath\gen{\predicates, \objects, \actionschema, s_0, G}}
\newcommand{\hadd}{\ensuremath{h^{\add}}}
\newcommand{\hsum}{\ensuremath{h^{\add}}}
\newcommand{\hmax}{\ensuremath{h^{\max}}}
\newcommand{\hplus}{\ensuremath{h^{+}}}
\newcommand{\hopt}{\ensuremath{h^{*}}}
\newcommand{\hff}{\ensuremath{h^{\ff}}}
\newcommand{\blind}{\text{blind}}
\newcommand{\hfftable}{$h^{\text{FF}}$}
\newcommand{\shgn}{\text{HGN}}
\newcommand{\existingfont}{}
\newcommand{\graphrepfont}{}
\newcommand{\asg}{\existingfont{ASG}}
\newcommand{\pdg}{\existingfont{PDG}}
\newcommand{\slg}{\graphrepfont{SLG}}
\newcommand{\dlg}{\graphrepfont{DLG}}
\newcommand{\flg}{\graphrepfont{FLG}}
\newcommand{\llg}{\graphrepfont{LLG}}
\newtheorem{theorem}{Theorem}[section]
\theoremstyle{definition}
\newtheorem{definition}[theorem]{Definition}
\def\N{{\mathbb N}}
\def\Ne{{\mathcal N}}
\def\h{{\mathbf h}}
\def\X{{\mathbf{X}}}
\def\y{{\boldsymbol{y}}}
\def\params{{\Theta}}
\def\Graph{\mathcal{G}}
\def\d{\delta}
\def\e{\varepsilon}
\def\F{{\mathcal{F}}}
\def\R{\mathbb{R}}
\def\S{\mathcal{S}}
\def\la{\leftarrow}
\renewcommand{\phi}{\varphi}
\newcommand{\abs}[1]{\left| #1 \right|}
\newcommand{\norm}[1]{\left\| #1 \right\|}
\newcommand{\gen}[1]{\left< #1 \right>}
\newcommand{\set}[1]{\left\{ #1 \right\}}
\newcommand{\seta}[1]{\{ #1 \}}
\newcommand{\setbig}[1]{\bigl\{ #1 \bigr\}}
\newcommand{\lr}[1]{\left( #1 \right)}
\newcommand{\biglr}[1]{\bigl( #1 \bigr)}
\newcommand{\todo}[1]{{\color{red}#1}}
\newcommand{\graph}[1]{\gen{V#1,E#1,\X#1}}
\newcommand{\edge}[2]{\gen{#1}_{#2}}
\newcommand{\neighbours}{\mathcal{N}}
\DeclareMathOperator*{\concat}{%
    \mathchoice%
        {\Big\Vert}%
        {\big\Vert}%
        {\Vert}%
        {\Vert}%
}
\DeclareMathOperator*{\concatsmall}{%
        {\Vert}%
}
\newcommand{\yyshift}{}
\newcommand{\xxshift}{}
\newcommand{\csize}{}
\definecolor{caribbeangreen}{rgb}{0.0, 0.8, 0.6}
\definecolor{brilliantlavender}{rgb}{0.96, 0.73, 1.0}
\definecolor{amethyst}{rgb}{0.6, 0.4, 0.8}
\definecolor{ao(english)}{rgb}{0.0, 0.5, 0.0}
\definecolor{arylideyellow}{rgb}{0.91, 0.84, 0.42}
\definecolor{asparagus}{rgb}{0.53, 0.66, 0.42}
\definecolor{aquamarine}{rgb}{0.5, 1.0, 0.83}
\definecolor{babyblue}{rgb}{0.54, 0.81, 0.94}
\definecolor{fwtchanged}{rgb}{0.3, 0.3, 0.7}
\newcommand{\header}[1]{\rotatebox[origin=l]{90}{\hspace*{-0.2cm} #1}}
\newcommand{\colorofcell}{blue}
\newcommand{\first}[1]{\cellcolor{\colorofcell!30}{\textbf{#1}}}
\newcommand{\second}[1]{\cellcolor{\colorofcell!20}{{#1}}}
\newcommand{\third}[1]{\cellcolor{\colorofcell!10}{{#1}}}
\newcommand{\normalcell}[1]{{{#1}}}
\newcommand{\zerocell}[1]{-}
\renewcommand{\todo}[1]{}
\begin{document}

\maketitle

\begin{abstract}
We present three novel graph representations of planning tasks suitable for learning domain-independent heuristics using Graph Neural Networks (GNNs) to guide search.
In particular, to mitigate the issues caused by large grounded GNNs we present the first method for learning domain-independent heuristics with only the lifted representation of a planning task.
We also provide a theoretical analysis of the expressiveness of our models, showing that some are more powerful than STRIPS-HGN, the only other existing model for learning domain-independent heuristics.
Our experiments show that our heuristics generalise to much larger problems than those in the training set, vastly surpassing STRIPS-HGN heuristics.
\end{abstract}

\section{Introduction}
Graph Neural Networks (GNNs) have recently attracted the interest of the planning community, for learning heuristic cost estimators, task orderings, value functions, action policies, and portfolios, to name a few \cite{shen20stripshgn,garg2020symnet,karia2021learning,staahlberg2022optimalpolicies,ma2020online,sharma2022symnet2,teichteil:etal:23}.
GNNs exhibit great generalisation potential, since once trained, they offer outputs for any graph, regardless of size or structure.
Representing the structure of planning domains as graphs, GNNs can train on a set of small problems to learn generalised policies and heuristics that apply to all problems in a domain.
As noted by~\citet{shen20stripshgn}, this also allows for learning heuristics applicable to multiple domains, or even domain-independent heuristics that apply to domains unseen during training.

In this paper, we explore the use of GNNs for learning both domain-dependent and domain-independent heuristics for classical planning, with an emphasis on the latter.
To the best of our knowledge, STRIPS-HGN \cite{shen20stripshgn} is the only existing model designed to learn domain-independent heuristic functions from scratch.
The models in \cite{staahlberg2022optimalpolicies} are inherently domain-dependent, given that they use different update functions for predicates of the planning problems, and hence cannot generalise to unseen problems with a different number of predicates.
Neural Logic Machines \cite{dong2019nlm,gehring2022reinforcement} are also domain-dependent models as they assume a maximum arity of input predicates.

STRIPS-HGN has several drawbacks when learning domain-independent heuristics: (1) its hypergraph representation of planning tasks ignores delete lists and thus cannot theoretically learn \hopt, (2) its aggregation function is not permutation invariant due to ordering the neighbours of each node which may prevent it from generalising effectively, (3) it assumes a bound on the sizes of action preconditions and effects, meaning that it also has to discard certain edges in its hypergraph in its message updating step, and (4) it requires constructing the whole grounded hypergraph, whose size becomes impractical for large problems.

Our contributions remedy these issues and make the following advances to the state of the art.
Building on well-known planning formalisms, namely propositional STRIPS, FDR, and lifted STRIPS, we define novel grounded and lifted graph representations of planning tasks suitable for learning domain-independent heuristics.
In particular, this results in the first domain-independent GNN heuristic based on a lifted graph representation.
We also establish the theoretical expressiveness of Message-Passing Neural Networks (MPNN) acting upon our graphs in terms of the known domain-independent heuristics they are able to learn, and suggest further research directions for learning \hopt.

We then conduct two sets of experiments to complement our theory and evaluate the effectiveness of learned heuristics.
The first set aims to measure the accuracy of learned heuristics on unseen tasks, while the second set evaluates the effectiveness of such learned heuristics for heuristic search.
Planners guided by heuristics learnt using our new graphs solve significantly larger problems than those considered by \citet{shen20stripshgn}, \citet{karia2021learning} and \citet{stahlberg:etal:23:kr}.
In the domain-dependent setting, planners guided by our lifted heuristics achieves greater coverage than using \hff{} in several domains and returns lower cost plans overall.

\section{Background and Notation}
\subsubsection{Planning}
A classical planning task~\cite{geffner2013concise} is a state transition model $\Pi = \langle S, A, s_0, G \rangle$ consisting of a set $S$ of states, a set $A$ of actions, an initial state $s_0$, and a set $G$ of goal states.
Each action $a\in A$ is a function $a: S \to S \cup{\bot}$ mapping a state $s$ in which the action is applicable to its successor $a(s)$, and states in which it is not applicable to $\bot$.
The cost of the action is $c(a)\in \N$.
A solution or a \emph{plan} in this model is a sequence of actions $\pi=a_1,\ldots,a_n$ such that $s_i = a_i(s_{i-1}) \neq \bot$ for all $i\in \{1, \dots, n\}$ and $s_n \in G$.
In other words, a plan is a sequence of applicable actions which when executed, progresses our initial state to a goal state.
The cost of $\pi$ is $c(\pi) = \sum_{i=1}^n c(a_i)$.
A planning task is \emph{solvable} if there exists at least one plan.
We now describe three ways to represent planning tasks.

A \emph{STRIPS planning task} is a tuple $\Pi = \probstrips$ with $P$ a set of propositions (or facts), $A$ a set of actions, $s_0 \subseteq P$ an initial state, and $G \subseteq P$ the goal condition.
A state $s$ is a subset of $P$ and is a goal state if $G \subseteq s$.
An action $a \in A$ is a tuple $\gen{\pre(a), \add(a), \del(a)}$ with $\pre(a), \add(a), \del(a) \subseteq P$ and $\add(a) \cap \del(a) = \emptyset$, and has an associated cost $c(a) \in \R$.
The action is applicable in a state $s$ if $\pre(a) \subseteq s$, and leads to the successor state $s' = (s\setminus \del(a)) \cup \add(a)$.

An \emph{FDR planning task}~\citep{helmert2009concise} is a tuple $\Pi = \probfdr$ where $\mathcal{V}$ is a finite set of state variables $v$, each with a finite domain $D_v$.
A fact is a pair $\gen{v, d}$ where $v \in \mathcal{V}, d \in D_v$.
A partial variable assignment is a set of facts where each variable appears at most once.
A total variable assignment is a partial variable assignment where each variable appears once.
The initial state $s_0$ is a total variable assignment and the goal condition $s_\star$ is a partial variable assignment.
Again, $A$ is a set of actions of the form $a = \gen{\pre(a), \eff(a)}$ where $\pre(a)$ and $\eff(a)$ are partial variable assignments.
An action $a$ is applicable in $s$ if $\pre(a) \subseteq s$, and leads to the successor state $s' = (s \cup \eff(a)) \setminus \seta{\gen{v, d} \in s \mid \exists d' \in D_v, \gen{v, d'} \in \eff(a) \wedge d\not=d'}.$

A \emph{lifted planning task}~\cite{lauer2021polytime} is a tuple $\Pi = \problifted$ where $\predicates$ is a set of first-order predicates, $\actionschema$ is a set of action schema, $\objects$ is a set of objects, $s_0$ is the initial state and $G$ is the goal condition.
A predicate $P \in \predicates$ has parameters $x_1,\ldots,x_{n_P}$ for $n_P \in \N$, noting that $n_P$ depends on $P$ and it is possible for a predicate to have no parameters.
A predicate with $n$ parameters is an $n$-ary predicate.
A predicate can be instantiated by assigning some of the $x_i$ with objects from $\objects$ or other defined variables.
A predicate where all variables are assigned with objects is grounded, and is called a ground proposition.
The initial state and goal condition are sets of ground propositions.
An action schema $a \in \actionschema$ is a tuple $\gen{\Delta(a), \pre(a), \add(a), \del(a)}$ where $\Delta(a)$ is a set of parameter variables and $\pre(a), \add(a)$ and $\del(a)$ are sets of predicates from $\predicates$ instantiated with either parameter variables or objects in $\Delta(a) \cup \objects$.
Similarly to predicates, an action schema with $n=\abs{\Delta(a)}$ parameter variables is an $n$-ary action schema.
An action schema where each variable is instantiated with an object is an action.
Action application and successor states are defined in the same way for both STRIPS and lifted planning.

\subsubsection{Graph neural networks}
The introduction of graph neural networks (GNN) requires additional terminology.
In the context of learning tasks, we define a graph with edge labels to be a tuple $\graph{}$ where $V$ is a set of nodes, $E$ a set of undirected edges with labels where $\edge{v,u}{\iota}=\edge{u,v}{\iota}\in E$ denotes an undirected edge with label $\iota$ between nodes $u,v \in V$, and $\X: V \to \R^d$ a function representing the node features of the graph.
The edge neighbourhood of a node $u$ in a graph under edge label $\iota$ is $\neighbours_\iota(u) = \set{\edge{u,v}{\kappa} \in E \mid \kappa = \iota}$.
The edge neighbourhood of a node $u$ in a graph is $\neighbours(u) = \bigcup_{\iota \in \mathcal{R}}\neighbours_\iota(u)$ where $\mathcal{R}$ is the set of edge labels for the graph.

A Message-Passing Neural Network (MPNN)~\cite{pmlr-v70-gilmer17a} is a type of GNN which iteratively updates node embeddings of a graph with edge labels locally in one-hop neighbourhoods with the general message passing equation
\begin{align*}
\h_u^{(t+1)} = 
\textstyle
\comb^{(t)}\biglr{
\h_u^{(t)}, 
\textstyle\aggr^{(t)}_{\edge{u,v}{\iota} \in \Ne(u)}
f^{(t)}\biglr{\h_u^{(t)}, \h_v^{(t)}, \iota}
}
\end{align*}
where in the $t$-th iteration or layer of the network, $\h_u^{(t)} \in \R^{F^{(t)}}$ is the embedding of node $u$ of dimension $F^{(t)}$, and $\h_u^{(0)}$ is given by the node feature corresponding to $u$ in $\X$.
We have that $\comb^{(t)}$ and $f^{(t)}$ are arbitrary almost everywhere differentiable functions and {$\aggr^{(t)}$} is usually a differentiable permutation invariant function acting on sets of vectors such as sum, mean or component-wise max.

In order for an MPNN to produce a graph representation for an input, it is then common to pool all the node embeddings after a number of message passing updates with a \emph{graph readout} function $\Phi$ which is again usually given by a differentiable permutation invariant function.

\section{Representation}

In this section, we introduce three novel graph representations designed for learning heuristic functions for planning tasks.
Each graph is tailored to a specific task representation and all of them allow us to learn \emph{domain-independent} heuristic functions.

\subsection{Grounded Graphs}
A graph representation for grounded STRIPS problems already exists, namely the STRIPS problem description graph (\pdg{})~\citep{shleyfman2015heuristics}.
It was originally used to study which classical heuristics were invariant under symmetries in the planning task.
In order to learn heuristics, we provide an alternative graph representation for STRIPS problems which includes node features and edge labels.

\begin{definition}
The \emph{STRIPS learning graph (\slg{})} of a STRIPS problem $\probstrips$ is the graph $\graph{}$ with
\begin{itemize}
\item $V = A \cup P$, 
\item $E = E_{\pre} \cup E_{\add} \cup E_{\del}$ where for $\iota \in \set{\pre, \add, \del}$ 
$$E_{\iota} = \set{\edge{a,p}{\iota} \mid p \in \iota(a), a \in A},$$ 
\item $\X: V \to \R^3$ defined by $u \mapsto [u \in P; u \in s_0; u \in G].$
\end{itemize}
\end{definition}
By allowing for edge labels, we only require one node for each proposition to encode the semantics of action effects in contrast to STRIPS \pdg{} which requires three nodes for each proposition.
Thus \slg{} is smaller while not losing any information.
We also note that three dimensional node features are sufficient for encoding whether a node corresponds to an action or proposition, and in the latter case, whether it is true in the initial state and present in the goal.
Fig.~\ref{fig:slg} illustrates an example \slg{} subgraph.

\renewcommand{\xxshift}{2cm}
\newcommand{\limmmmmmme}{0.65cm}
\renewcommand{\yyshift}{\limmmmmmme}
\renewcommand{\csize}{0.52cm}
\newcommand{\ang}{360/7}
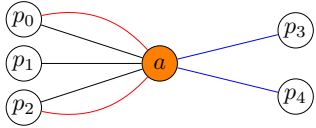
\begin{figure}
  \centering
    \resizebox{0.5\columnwidth}{!}{%
    \begin{tikzpicture}[square/.style={regular polygon,regular polygon sides=4}]
    \node[shape=circle,draw=black,minimum size=\csize,fill=orange] (A) at (0,0) {};
    \node[shape=circle,draw=black,minimum size=\csize] (p0) at ([shift={(-\xxshift,1*\yyshift)}]A) {};
    \node[shape=circle,draw=black,minimum size=\csize] (p1) at ([shift={(-\xxshift,0*\yyshift)}]A) {};
    \node[shape=circle,draw=black,minimum size=\csize] (p2) at ([shift={(-\xxshift,-1*\yyshift)}]A) {};
    \node[shape=circle,draw=black,minimum size=\csize] (p3) at ([shift={(\xxshift,0.75*\yyshift)}]A) {};
    \node[shape=circle,draw=black,minimum size=\csize] (p4) at ([shift={(\xxshift,-0.75*\yyshift)}]A) {};
    \node at (A) {$a$};
    \node at (p0) {$p_0$};
    \node at (p1) {$p_1$};
    \node at (p2) {$p_2$};
    \node at (p3) {$p_3$};
    \node at (p4) {$p_4$};

    \path [-,draw=black] (p0) edge (A);
    \path [-,draw=black] (p1) edge (A);
    \path [-,draw=black] (p2) edge (A);
    \path [-,draw=blue] (p3) edge (A);
    \path [-,draw=blue] (p4) edge (A);
    \path [-,draw=red] (p0) edge[bend left] (A);
    \path [-,draw=red] (p2) edge[bend right] (A);
  \end{tikzpicture}
}
  \caption{The \slg{} subgraph of an action $a$ defined by $\pre(a) = \set{p_0,p_1,p_2}$, $\add(a) = \set{p_3,p_4}$ and $\del(a) = \set{p_0,p_2}$, indicated by black, blue and red edges respectively. 
  }\label{fig:slg}
\end{figure}

The FDR problem description graph (\pdg{})~\citep{pochter2011exploiting} is an existing graph representation designed to identify symmetrical states during search for FDR problems.
Since \pdg{} is not designed for learning, it lacks vector node features and edge labels.
Def.~\ref{def:flg} extends FDR \pdg{} with learning in mind by retaining its graph structure and adding node features and edge labels.
Fig.~\ref{fig:flg} illustrates an example of an \flg{} subgraph.

\renewcommand{\xxshift}{0.75cm}
\renewcommand{\yyshift}{-\limmmmmmme}
\newcommand{\flgfontsize}{\scriptsize}
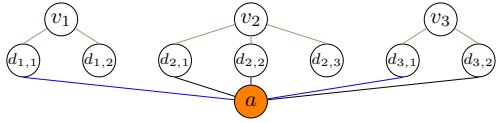
\begin{figure}
  \centering
    \resizebox{0.8\columnwidth}{!}{%
    \begin{tikzpicture}[square/.style={regular polygon,regular polygon sides=4}]
    \node[shape=circle,draw=black,minimum size=\csize,fill=orange] (A) at (0,\yyshift) {};
    \node at (A) {$a$};
    \node[shape=circle,draw=black,minimum size=\csize] (V1) at ([shift={(-4*\xxshift,-2*\yyshift)}]A) {};
    \node at (V1) {$v_1$};
    \node[shape=circle,draw=black,minimum size=\csize] (V2) at ([shift={(0*\xxshift,-2*\yyshift)}]A) {};
    \node at (V2) {$v_2$};
    \node[shape=circle,draw=black,minimum size=\csize] (V3) at ([shift={(4*\xxshift,-2*\yyshift)}]A) {};
    \node at (V3) {$v_3$};
    \node[shape=circle,draw=black,minimum size=\csize] (D11) at ([shift={(-0.8*\xxshift,\yyshift)}]V1) {};
    \node at (D11) {\flgfontsize$d_{1,1}$};
    \node[shape=circle,draw=black,minimum size=\csize] (D12) at ([shift={(0.8*\xxshift,\yyshift)}]V1) {};
    \node at (D12) {\flgfontsize$d_{1,2}$};
    \node[shape=circle,draw=black,minimum size=\csize] (D21) at ([shift={(-1.6*\xxshift,\yyshift)}]V2) {};
    \node at (D21) {\flgfontsize$d_{2,1}$};
    \node[shape=circle,draw=black,minimum size=\csize] (D22) at ([shift={(0*\xxshift,\yyshift)}]V2) {};
    \node at (D22) {\flgfontsize$d_{2,2}$};
    \node[shape=circle,draw=black,minimum size=\csize] (D23) at ([shift={(1.6*\xxshift,\yyshift)}]V2) {};
    \node at (D23) {\flgfontsize$d_{2,3}$};
    \node[shape=circle,draw=black,minimum size=\csize] (D31) at ([shift={(-0.8*\xxshift,\yyshift)}]V3) {};
    \node at (D31) {\flgfontsize$d_{3,1}$};
    \node[shape=circle,draw=black,minimum size=\csize] (D32) at ([shift={(0.8*\xxshift,\yyshift)}]V3) {};
    \node at (D32) {\flgfontsize$d_{3,2}$};

    \path [-,draw=asparagus] (V1) edge (D11.north);
    \path [-,draw=asparagus] (V1) edge (D12.north);
    \path [-,draw=asparagus] (V2) edge (D21.north);
    \path [-,draw=asparagus] (V2) edge (D22.north);
    \path [-,draw=asparagus] (V2) edge (D23.north);
    \path [-,draw=asparagus] (V3) edge (D31.north);
    \path [-,draw=asparagus] (V3) edge (D32.north);

    \path [-,draw=black] (A) edge (D21.south);
    \path [-,draw=black] (A) edge (D32.south);

    \path [-,draw=blue] (A) edge (D11.south);
    \path [-,draw=blue] (A) edge (D22.south);
    \path [-,draw=blue] (A) edge (D31.south);
  \end{tikzpicture}
    }
  \caption{The \flg{} subgraph of an action $a$ defined by $\pre(a) = \set{\gen{v_2,d_{2,1}},\gen{v_3,d_{3,2}}}$ and $\eff(a) = \{\gen{v_1,d_{1,1}}$, $\gen{v_2,d_{2,2}}$, $\gen{v_3,d_{3,1}}\}$, indicated by black and blue edges respectively. Asparagus edges link variables and values.
  }\label{fig:flg}
\end{figure}
\begin{definition}\label{def:flg} The \emph{FDR learning graph (\flg{})} of an FDR problem
$\probfdr$ is the graph $\graph{}$ with
\begin{itemize}
\item $V = \mathcal{V} \cup \bigcup_{v \in \mathcal{V}} D_v \cup A$,
\item $E = E_{\varval} \cup E_{\pre} \cup E_{\eff}$ with
\begin{align*}
E_{\varval} &= \textstyle\bigcup_{v \in \mathcal{V}}\seta{
\edge{v, d}{\varval}
\mid d \in D_v} \\
E_{\pre} &= \textstyle\bigcup_{a \in A}\seta{
\edge{d, a}{\pre}
\mid (v, d) \in \pre(a)} \\
E_{\eff} &= \textstyle\bigcup_{a \in A}\seta{
\edge{d, a}{\eff}
\mid (v, d) \in \eff(a)},
\end{align*}
\item $\X: V \to \R^5$ defined by 
\begin{align*}
u \mapsto [u\in\mathcal{V};u\in A;\val(u); \true(u); \goal(u)]
\end{align*}
where $\val(u) = \exists v \in \mathcal{V}$, $u \in D_v$, $\true(u)= \exists v \in \mathcal{V}$,
$\gen{v, u} \in s_0$ and $\goal(u) = \exists v \in \mathcal{V}$, $\gen{v, u} \in s_{\star}$.
\end{itemize}
\end{definition}

\subsection{Lifted Graphs}

Lifted algorithms for planning offer an advantage by avoiding the need for grounding thus saving both time and memory.
To leverage these benefits for heuristic learning, a lifted graph representation that is amenable to learning is needed.
However, designing such graphs is non-trivial due to the extra relations to encode, namely the interactions between predicates, action schema, propositions true in the current state, the goal condition and objects.
The only graph representation encoding all the information of a lifted planning task is the \emph{abstract structure graph} (\asg{})~\citep{sievers2019asg}.
Similarly to \pdg{}, \asg{} was designed to compute symmetries but it has also been used for learning planning portfolios~\citep{katz2018delfi}.

 An \asg{} is constructed by first defining a coloured graph on \emph{abstract structures}, a recursive structure defined with sets, tuples, and the input objects, and then defining a lifted planning task as an abstract structure.
\asg{}s have several limitations when used with MPNNs for making predictions.
Their encoding of predicate and action schema arguments is done via a sequence or directed path, where the graph uses a directed path of length $n$ to encode $n$ arguments.
There are also many more auxiliary nodes to encode the abstract structures.
These issues cause problems as a larger receptive field is required for MPNNs to learn the structure and semantics of the planning problem, and directed edges limit information flow and expressiveness when used with MPNNs.

\renewcommand{\yyshift}{0.75cm}
\newcommand{\yyshiftb}{0.825cm}
\renewcommand{\xxshift}{0.9cm}
\renewcommand{\csize}{0.6cm}
\newcommand{\descccsize}{\scriptsize}
\newcommand{\desccc}{5.0cm}
\newcommand{\desccccc}{0.5cm}
\newcommand{\anchor}{west}
\newcommand{\llgword}[1]{\tiny\text{#1}}
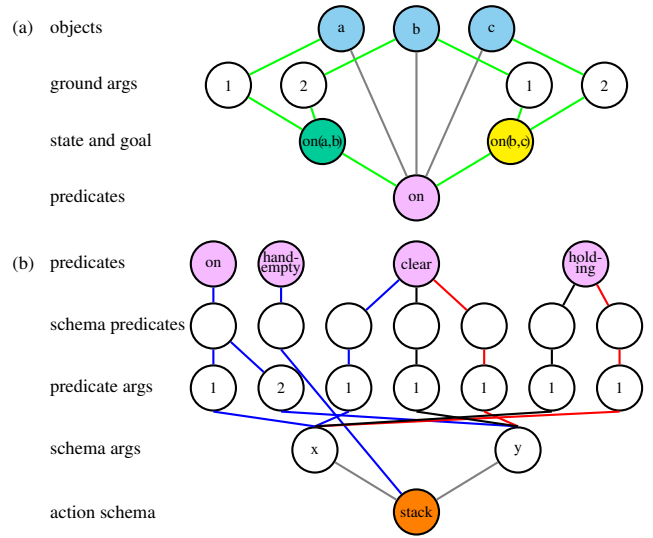
\begin{figure}
  \begin{subfigure}[b]{\linewidth}
    \begin{tikzpicture}[thick]
      \node[shape=circle,draw=black,minimum size=\csize,fill=brilliantlavender] (on) at (0,0) {
      };
      \node[shape=circle,draw=black,minimum size=\csize,fill=caribbeangreen] (onab) at (-2.5/1.8*\xxshift,\yyshift) {
      };
      \node[shape=circle,draw=black,minimum size=\csize,fill=yellow] (onbc) at ( 2.5/1.8*\xxshift,\yyshift) {
      };
      \node[shape=circle,draw=black,minimum size=\csize] (ab1) at ( -5/1.8*\xxshift,2*\yyshift) {\llgword 1};
      \node[shape=circle,draw=black,minimum size=\csize] (ab2) at ( -3/1.8*\xxshift,2*\yyshift) {\llgword 2};
      \node[shape=circle,draw=black,minimum size=\csize] (bc1) at (  3/1.8*\xxshift,2*\yyshift) {\llgword 1};
      \node[shape=circle,draw=black,minimum size=\csize] (bc2) at (  5/1.8*\xxshift,2*\yyshift) {\llgword 2};
      \node[shape=circle,draw=black,minimum size=\csize,fill=babyblue] (a) at (  -2/1.8*\xxshift,3*\yyshift) {
      };
      \node[shape=circle,draw=black,minimum size=\csize,fill=babyblue] (b) at (   0/1.8*\xxshift,3*\yyshift) {
      };
      \node[shape=circle,draw=black,minimum size=\csize,fill=babyblue] (c) at (   2/1.8*\xxshift,3*\yyshift) {
      };
      \node at (onab) {$\llgword{on(\!a,b\!)}$};
      \node at (onbc) {$\llgword{on(\!b,c\!)}$};
      \node at (a) {$\llgword{a}$};
      \node at (b) {$\llgword{b}$};
      \node at (c) {$\llgword{c}$};
      \node at (on) {$\llgword{on}$};

      \node[anchor=\anchor] (zzz) at (-\desccc-\desccccc,3*\yyshift) {\descccsize{(a)}};
      \node[anchor=\anchor] (zzz) at (-\desccc,3*\yyshift) {\descccsize{objects}};
      \node[anchor=\anchor] (zzz) at (-\desccc,2*\yyshift) {\descccsize{ground args}};
      \node[anchor=\anchor] (zzz) at (-\desccc,\yyshift) {\descccsize{state and goal}};
      \node[anchor=\anchor] (zzz) at (-\desccc,0) {\descccsize{predicates}};
  
      \path [-,draw=gray] (a) edge (on);
      \path [-,draw=gray] (b) edge (on);
      \path [-,draw=gray] (c) edge (on);

      \path [-,draw=green] (on) edge (onab);
      \path [-,draw=green] (on) edge (onbc);
      \path [-,draw=green] (onab) edge (ab1);
      \path [-,draw=green] (onab) edge (ab2);
      \path [-,draw=green] (onbc) edge (bc1);
      \path [-,draw=green] (onbc) edge (bc2);
      \path [-,draw=green] (a) edge (ab1);
      \path [-,draw=green] (b) edge (ab2);
      \path [-,draw=green] (b) edge (bc1);
      \path [-,draw=green] (c) edge (bc2);
    \end{tikzpicture}
  \end{subfigure}
  \renewcommand{\yyshift}{\yyshiftb}
  \begin{subfigure}[b]{\linewidth}
    \begin{tikzpicture}[thick]
      \node (empty) at (-2.5*\xxshift,1.5*\yyshift) {};

      \node[shape=circle,draw=black,minimum size=\csize,fill=brilliantlavender] (on) at (-3*\xxshift,\yyshift) {
      };
      \node[shape=circle,draw=black,minimum size=\csize,fill=brilliantlavender] (handempty) at (-2*\xxshift,\yyshift) {
      };
      \node[shape=circle,draw=black,minimum size=\csize,fill=brilliantlavender] (clear) at (0*\xxshift,\yyshift) {
      };
      \node[shape=circle,draw=black,minimum size=\csize,fill=brilliantlavender] (holding) at (2.5*\xxshift,\yyshift) {
      };

      \node at (on) {$\llgword{on}$};
      \node at ([shift={(0, 0.1)}]handempty) {$\llgword{hand-}$};
      \node at ([shift={(0, -0.075)}]handempty) {$\llgword{empty}$};
      \node at (clear) {$\llgword{clear}$};
      \node at ([shift={(0, 0.1)}]holding) {$\llgword{hold-}$};
      \node at ([shift={(0,-0.075)}]holding) {$\llgword{ing}$};

      \node[shape=circle,draw=black,minimum size=\csize] (l3) at (-3*\xxshift,0) {};
      \node[shape=circle,draw=black,minimum size=\csize] (l2) at (-2*\xxshift,0) {};
      \node[shape=circle,draw=black,minimum size=\csize] (l1) at (-1*\xxshift,0) {};
      \node[shape=circle,draw=black,minimum size=\csize] (00) at ( 0*\xxshift,0) {};
      \node[shape=circle,draw=black,minimum size=\csize] (r1) at ( 1*\xxshift,0) {};
      \node[shape=circle,draw=black,minimum size=\csize] (r2) at ( 2*\xxshift,0) {};
      \node[shape=circle,draw=black,minimum size=\csize] (r3) at ( 3*\xxshift,0) {};

      \node[shape=circle,draw=black,minimum size=\csize] (o01) at (-3*\xxshift,-\yyshift) {\llgword 1};
      \node[shape=circle,draw=black,minimum size=\csize] (o02) at (-2*\xxshift,-\yyshift) {\llgword 2};
      \node[shape=circle,draw=black,minimum size=\csize] (c01) at (-1*\xxshift
      ,-\yyshift) {\llgword 1};
      \node[shape=circle,draw=black,minimum size=\csize] (c11) at ( 0*\xxshift
      ,-\yyshift) {\llgword 1};
      \node[shape=circle,draw=black,minimum size=\csize] (c21) at ( 1*\xxshift,-\yyshift) {\llgword 1};
      \node[shape=circle,draw=black,minimum size=\csize] (h01) at ( 2*\xxshift,-\yyshift) {\llgword 1};
      \node[shape=circle,draw=black,minimum size=\csize] (h11) at ( 3*\xxshift,-\yyshift) {\llgword 1};

      \node[shape=circle,draw=black,minimum size=\csize] (x) at (-1.5*\xxshift,-2*\yyshift) {
      };
      \node[shape=circle,draw=black,minimum size=\csize] (y) at ( 1.5*\xxshift,-2*\yyshift) {
      };

      \node[shape=circle,draw=black,minimum size=\csize,fill=orange] (stack) at (0,-3*\yyshift) {
      };

      \node[anchor=\anchor] (zzz) at (-\desccc-\desccccc,\yyshift) {\descccsize{(b)}};
      \node[anchor=\anchor] (zzz) at (-\desccc,\yyshift) {\descccsize{predicates}};
      \node[anchor=\anchor] (zzz) at (-\desccc,0) {\descccsize{schema predicates}};
      \node[anchor=\anchor] (zzz) at (-\desccc,-\yyshift) {\descccsize{predicate args}};
      \node[anchor=\anchor] (zzz) at (-\desccc,-2*\yyshift) {\descccsize{schema args}};
      \node[anchor=\anchor] (zzz) at (-\desccc,-3*\yyshift) {\descccsize{action schema}};

      \node at (x) {$\llgword{x}$};
      \node at (y) {$\llgword{y}$};
      \node at (stack) {$\llgword{stack}$};
  
      \path [-,draw=blue] (on) edge (l3);
      \path [-,draw=blue] (l3) edge (o01);
      \path [-,draw=blue] (l3) edge (o02);
      \path [-,draw=blue] (o01.south) edge (x.north);
      \path [-,draw=blue] (o02.south) edge (y.north);
      \path [-,draw=blue] (handempty) edge (l2);
      \path [-,draw=blue] (l2.south) edge (stack);
      \path [-,draw=blue] (clear) edge (l1);
      \path [-,draw=blue] (c01) edge (l1);
      \path [-,draw=blue] (c01.south) edge (x.north);

      \path [-,draw=red] (holding) edge (r3);
      \path [-,draw=red] (r3) edge (h11);
      \path [-,draw=red] (h11.south) edge (x.north);
      \path [-,draw=red] (clear) edge (r1);
      \path [-,draw=red] (r1) edge (c21);
      \path [-,draw=red] (c21.south) edge (y.north);

      \path [-,draw=black] (holding) edge (r2);
      \path [-,draw=black] (r2) edge (h01);
      \path [-,draw=black] (h01.south) edge (x.north);
      \path [-,draw=black] (clear) edge (00);
      \path [-,draw=black] (00) edge (c11);
      \path [-,draw=black] (c11.south) edge (y.north);

      \path [-,draw=gray] (stack) edge (x);
      \path [-,draw=gray] (stack) edge (y);
    \end{tikzpicture}
  \end{subfigure}
  \caption{\llg{} instance subgraph (a) and schema subgraph (b) with graph layer descriptions of a Blocksworld instance.}\label{fig:llg}
\end{figure}

To overcome the issues with \asg{}s, Def.~\ref{def:llg} introduces a new graph representation for lifted planning tasks designed to be used with MPNNs.
It consists of two main components: the schema subgraph which encodes the domain's action schemas and the instance subgraph which encodes the instance specific information containing the current state and the goal condition.
We assume no partially instantiated action schema as is usual for most PDDL domains, but the definition could be extended to allow partial grounding as a tradeoff between expressiveness and graph size.

\newcommand{\llgitem}{\newline\noindent$\bullet$\;}
\begin{definition}\label{def:llg} Let $T \in \N$.
The \emph{lifted learning graph (\llg{})} of a
lifted problem $\Pi=\problifted$ is the graph $G=\graph{}$ with
\llgitem$V = \predicates \cup \objects \cup N(\actionschema) \cup N(s_0 \cup G)$ with
\begin{align*}
N(s_0 \cup G) &= 
\displaystyle\textstyle\bigcup_{p=P(o_1,\ldots,o_{n_P}) \in s_0 \cup G}
\set{p, p_1,\ldots,p_{n_P}} \\
N(\actionschema) &= \textstyle\bigcup_{a \in \actionschema} \bigl(
\set{a} \cup 
\set{a_{\d} \mid \d \in \Delta(a)} \cup 
\\
&\quad\;\;\textstyle\bigcup_{f\in \set{\pre,\add,\del}} 
\textstyle\bigcup_{p=P(\d_1,\ldots,\d_{n_P}) \in f(a)}
\\
&\quad\quad\bigl\{
p_{a,f}, 
p_{a,f,1}, \ldots, 
p_{a,f,n_P}
\bigr\}
\bigr) 
\end{align*}
$N(s_0 \cup G)$ contains nodes corresponding to the state and goal, and ground arguments layer as in Fig.~\ref{fig:llg}(a), while $N(\actionschema)$ provides all the nodes corresponding to the action schema, schema argument, predicate argument and schema predicate layers in Fig.~\ref{fig:llg}(b).
\llgitem$E = E_{\nu} \cup E_{\gamma} \cup \textstyle\bigcup_{f\in \set{\pre,\add,\del}} E_{f}$
where
\begin{align*}
E_{\nu} &= 
\setbig{
\edge{o, P}{\nu} \mid o \in \objects, P \in \predicates
} \;\cup \\
&\quad\;\;\setbig{
\edge{a, a_{\d}}{\nu} \mid \d \in \Delta(a), a \in \actionschema
} 
\\
E_{\gamma} &= \textstyle\bigcup_{p=P(o_1,\ldots,o_{n_P}) \in s_0 \cup G}
\bigl(
\setbig{
\edge{p, p_i}{\gamma} \!\mid\! i \!\in\! [n_P]}
\;\cup
\\
&\quad\;\;
\setbig{
\edge{p_i, o_i}{\gamma} \!\mid\! i \!\in\! [n_P]}
\cup 
\setbig{
\edge{p, P}{\gamma}
}
\bigr)
\\
E_f &= 
\textstyle\bigcup_{p=P() \in f(a)} 
\setbig{\edge{P, p_{a,f}}{f}, \edge{p_{a,f}, a}{f}}
\;\cup 
\\
&\quad\;\;\textstyle\bigcup_{p=P(\d_1,\ldots,\d_{n_P}) \in f(a), n_P \geq 1}
\bigl(
\setbig{\edge{P, p_{a,f}}{f}} \; \cup \\
&\quad\quad
\setbig{\edge{p_{a,f}, p_{a,f,i}}{f}, 
\edge{p_{a,f,i}, a_{\d_i}}{f} \!\mid\! i \!\in\! [n_P]}
\bigr)
\end{align*}
for $f \in \set{\pre, \add, \del}$.
$E_{\nu}$ connects objects to predicates and schemas to their arguments, as indicated by gray edges in Fig.~\ref{fig:llg}, $E_{\gamma}$ connects nodes in $\predicates$, $\objects$, and $N(s_0\cup G)$ in order to represent propositions in the goal and true in the state as instantiated predicates with objects in the correct arguments, and $\textstyle\bigcup_{f\in \set{\pre,\add,\del}} E_{f}$ connects nodes in $\predicates$ and $N(\actionschema)$ to encode the semantics of action schema in the graph.
\llgitem$\X: V \to \R^{5+T}$ defined by
\begin{align*}
u \mapsto [u \in \predicates; u \in \objects; u \in \actionschema; u \in s_0; u \in G] \concatsmall 
\overline{\pe}(u)
\end{align*}
where $\concatsmall$ denotes vector concatenation, $\overline{\pe}(u) = \pe(i)$ for $u$ of the
form $p_i$ or $p_{a,f,i}$ with $f \in \set{\pre,\add,\del}$ and $\overline{\pe}(u) = \vec{0}$
otherwise, and $\pe:\N \to \R^T$ is defined by a fixed randomly chosen injective map from $\N$
to the sphere $\set{x \in \R^T \mid \norm{x} = 1}$.
\end{definition}

We use the index function $\pe$ to encode the index at which an object instantiates the argument of a predicate or action schemas.
This usage of $\pe$ lets us address STRIPS-HGN's limitation of having a fixed maximum number of parameters for predicates and action schemas that is chosen before training.
Specifically, $\pe$ provides a numerically stable representation of an unbounded range of indexes that is agnostic to the maximum arity of the problem.
Moreover, $\pe$ improves generalisation to large and unseen indexes as they are mapped to normalised vectors already seen by the model.

IF draws inspiration from positional encoding functions used in Transformers~\cite{vaswani2017attention} and GNNs~\cite{li2020distance,dwivedi2021graph,wang2021equivariant} for encoding positions as vectors.
However, while Transformers and GNNs use positional encodings to encode all input tokens and graph nodes respectively, we use $\pe$ features only in the subset of nodes required to encode argument indexes in the lifted planning task.
Furthermore, positional encodings aim to correlate positions and their corresponding features, such that objects close to each other are given similar features.
In contrast, $\pe$ generates features for each index that are independent of one another.
Hence, $\pe$ features are randomly generated i.i.d.
for each index and also \emph{a priori} such that they can be used for domain-independent learning.

\section{Expressiveness}
We have defined three novel graph representations of planning tasks for the goal of learning domain-independent heuristics.
In this section we will categorise the expressiveness of such graph representations when used with MPNNs by identifying which domain-independent heuristics they are able to learn.
Our study also includes characterising the expressiveness of STRIPS-HGN~\cite{shen20stripshgn}, the previous work on learning domain-independent heuristics.
Fig.~\ref{fig:hierarchy} summarises the main theorems of this section via an expressiveness hierarchy.
Proofs of theorems are provided in a technical report~\cite{chen2023learning}.

We begin with a lower bound on what MPNNs can learn by showing that they can theoretically learn to imitate algorithms for computing $\hmax$ and $\hadd$ on our grounded graphs with the use of the approximation theorem for neural networks~\cite{cybenko1989approximation,hornik1989multilayer}.
We note that the theorem does not say anything about generalisability.

\begin{figure}
\centering
\tikzset{every picture/.style={scale=0.27, every picture/.style={}}} 
\newcommand{\hierarchyfontsize}{\scriptsize}
\newcommand{\hierarchyhfontsize}{\tiny}
\newcommand{\newgraphslinewidth}{0.4mm}
\newcommand{\oldcolour}{gray!50}
\begin{tikzpicture}

  \draw[rounded corners, draw=\oldcolour] (2, 1) rectangle (10, 3) {};
  \node at (10, 3) [font=\hierarchyhfontsize, inner sep=1pt, anchor=south east] {$h^{\max\!\!/\!\!\add}$};

  \draw[rounded corners, draw=\oldcolour] (2, 1) rectangle (12, 4) {};
  \node at (12, 4) [font=\hierarchyfontsize, inner sep=1pt, anchor=south east] {STRIPS-HGN};

  \draw[rounded corners, draw=\oldcolour] (2, 1) rectangle (18, 7) {};
  \node at (18, 7) [font=\hierarchyhfontsize, inner sep=1pt, anchor=south east] {$\hplus$};

  \draw[rounded corners, draw=\oldcolour] (0, 0) rectangle (20, 8) {};
  \node at (20, 8) [font=\hierarchyhfontsize, inner sep=1pt, anchor=south east] {$\hopt$};

  \draw[rounded corners, line width=\newgraphslinewidth] (0, 0) rectangle (8, 2) {};
  \node at (8, 2) [font=\hierarchyfontsize, inner sep=1pt, anchor=south east] {\llg{}};

  \draw[rounded corners, line width=\newgraphslinewidth] (1, 0) rectangle (14, 5) {};
  \node at (14, 5) [font=\hierarchyfontsize, inner sep=1pt, anchor=south east] {\slg};

  \draw[rounded corners, line width=\newgraphslinewidth] (1, 0) rectangle (16, 6) {};
  \node at (16, 6) [font=\hierarchyfontsize, inner sep=1pt, anchor=south east] {\flg};
\end{tikzpicture}
\caption{
Expressiveness hierarchy of MPNNs on graph representations with respect to STRIPS-HGN and the heuristics $h^{\max}$, $h^{\add}$, $h^+$ and $\hopt$.
Bold outlines represent new graphs.}\label{fig:hierarchy}
\end{figure}
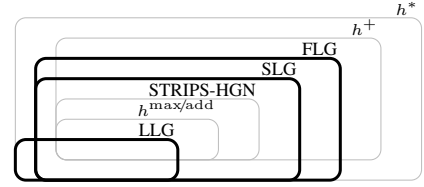

\begin{theorem}[MPNNs can learn $\hadd$ and $\hmax$ on grounded graphs]\label{thm:ground_h_add_max}
Let $L, B \in \N$, $\Graph \in \set{\slg, \flg}$, $\e > 0$ and $h \in \seta{\hadd, \hmax}$.
Then there exists a set of parameters $\params$ for an MPNN $\F_{\params}$ such that for all planning tasks $\Pi$, if naive dynamic programming for computing $h$ converges within $L$ iterations for $\Pi$, and $h(\Pi) \leq B$, then we have $\abs{h(\Pi) - \F_{\params}(\Graph(\Pi))} < \e$.
\end{theorem}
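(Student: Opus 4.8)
The plan is to exhibit an explicit choice of parameters $\params$ under which the MPNN $\F_{\params}$ simulates the standard relaxed fixed-point dynamic program (DP) defining $\hmax$ and $\hadd$, and then to realise each DP step with the network's message, aggregation and combination functions. Recall that, writing $\bigoplus$ for $\max$ (case $\hmax$) or $+$ (case $\hadd$), both heuristics are the least fixed point of $v(p)=0$ for $p\in s_0$ and $v(p)=\min_{a:\,p\in\add(a)}\bigl(c(a)+\bigoplus_{q\in\pre(a)}v(q)\bigr)$ otherwise, with $h(\Pi)=\bigoplus_{g\in G}v(g)$; naive DP computes this by initialising $v$ to $0$/$\infty$ and iterating the update until a fixed point, which by hypothesis happens within $L$ rounds. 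Since $\slg$ (and, via an extra hop through the variable/value nodes, $\flg$) is the action--fact incidence graph with $\pre$/$\add$/$\del$ edge labels, one DP round decomposes into a constant number of message-passing layers: a forward layer aggregates current fact embeddings into each action node along $\pre$-edges, using $\max$- or $\sum$-aggregation and adding $c(a)$; a backward layer sends each action value along $\add$-edges and, in $\comb$, takes the minimum with the fact's stored value while the $s_0$ bit of $\X$ clamps base-case facts to $0$. Minimisation over achievers is obtained from the permitted $\max$-aggregator by negating inside the message and combination maps ($\min_i x_i=-\max_i(-x_i)$). Running this for $\mathcal{O}(L)$ layers reproduces the fixed point, and a goal-gated readout $\Phi$ (selecting goal nodes via the goal bit in $\X$ and applying $\bigoplus$) outputs $h(\Pi)$.

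Next I would replace these idealised layer maps by the actual network components. Every operation used --- affine maps, $\min$, $\max$, $\sum$, and gating by a $0/1$ feature --- is continuous and piecewise linear, hence almost-everywhere differentiable, so each exact layer map is realisable by the MLP message/combination functions together with a $\max$- or $\sum$-aggregator. Invoking the universal approximation theorem of \citet{cybenko1989approximation} and \citet{hornik1989multilayer}, for any target per-layer accuracy $\delta$ there is a single set of MLP weights, shared across all instances (the construction only reads local node/edge structure), approximating each exact map to within $\delta$ on the relevant input region. Composing the approximate layers with the readout then yields $\F_{\params}$.

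The main obstacle, and the place where the hypotheses $h(\Pi)\le B$ and $L$-round convergence are essential, is making the universal approximation step uniform: it holds only on compact domains, yet $\sum$-aggregation for $\hadd$ feeds the forward combination map an input ($\sum_{q}v(q)$) whose range grows with the unbounded number of preconditions and achievers across tasks. The resolution is to cap every value at the constant $M=B$. One first shows that capping is harmless for the output: since $h(\Pi)\le B$ forces every fact value on an optimal achiever chain to be $\le B$, those values are untouched, and for any suboptimal achiever a short case analysis shows its capped value stays $\ge$ the true fact value (for $\hadd$, if some precondition exceeds $B$ its capped contribution alone already does so), so no spurious cheaper achiever is created and every $\min$ is preserved. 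Capping makes $\max$-aggregation range over $[0,B]$ and bounds all fact and action values, so all maps except the forward $\hadd$ combination act on compact domains; that remaining map, $x\mapsto\min(c(a)+x,B)$, is globally $1$-Lipschitz and eventually constant, hence uniformly approximable by an MLP on all of $[0,\infty)$ (or exactly representable if ReLU combinations are allowed). The second obstacle is error compounding: since each exact layer map is Lipschitz with constant $\Lambda=\Lambda(M)$, a per-layer error $\delta$ propagates to a final error $\mathcal{O}(\Lambda^{L}\delta)$, so choosing $\delta$ small in terms of $\e,L,\Lambda$ gives $\abs{h(\Pi)-\F_{\params}(\Graph(\Pi))}<\e$. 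I expect the delicate bookkeeping around the cap $M$ --- proving it leaves $h(\Pi)$ unchanged while confining every aggregated input to a fixed compact set, and handling $\hadd$'s unbounded sum separately from $\hmax$ --- to be the hardest part.
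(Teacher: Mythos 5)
Your proposal is correct and follows essentially the same route as the paper's proof: encode the naive dynamic-programming iterations as alternating MPNN layers (action updates aggregated along $\pre$-edges, fact updates along $\add$-edges with the $\min$ realised as a negated $\max$), use the bound $B$ to keep all values in a compact set, gate the readout by the goal bit, and invoke the universal approximation theorem with per-layer errors chosen small relative to $L$ and $\e$. If anything, your explicit capping argument for the unbounded precondition sums in the $\hadd$ case and your Lipschitz analysis of error compounding are more careful than the paper's treatment, which handles $\hadd$ by simply swapping in sum aggregation and readout.
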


MPNNs acting on \slg{} and \flg{} are strictly more expressive than STRIPS-HGN.
The idea of the theorem is that STRIPS-HGN discards delete effects which prohibits it from learning $h^*$.
Furthermore, it is possible to imitate STRIPS-HGN with minor assumptions on MPNN architectures acting on either of our grounded graphs.

\begin{theorem}[MPNNs on grounded graphs are strictly more expressive than
STRIPS-HGN]\label{thm:strips_hgn} Let $\Graph \in \set{\slg, \flg}$.
Given any set of parameters $\Theta$ for a STRIPS-HGN model $\S_{\Theta}$, there is a set of parameters $\Phi$ for an MPNN $\F_{\Phi}$ such that for any pair of planning tasks $\Pi_1$ and $\Pi_2$ where $\S_{\Theta}(\Pi_1) \not= \S_{\Theta}(\Pi_2)$, we have $\F_{\Phi}(\Graph(\Pi_1)) \not= \F_{\Phi}(\Graph(\Pi_2))$.
Furthermore, there exists a pair of planning problems $\Pi_1$ and $\Pi_2$ such that there exists $\Phi$ where $\F_{\Phi}(\Graph(\Pi_1)) \not= \F_{\Phi}(\Graph(\Pi_2))$ but $\S_{\Theta}(\Pi_1) = \S_{\Theta}(\Pi_2)$ for all $\Theta$.
\end{theorem}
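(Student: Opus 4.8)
The plan is to prove the two halves separately: first that $\F_\Phi$ can be made at least as discriminating as $\S_\Theta$ by simulating STRIPS-HGN exactly on the grounded graph, and then to exhibit a concrete pair of tasks that separates the two models using delete information.

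For the first half I would fix $\Theta$ and construct $\Phi$ so that $\F_\Phi(\Graph(\Pi)) = \S_\Theta(\Pi)$ for every task $\Pi$; an MPNN that computes $\S_\Theta$ exactly trivially distinguishes every pair that $\S_\Theta$ distinguishes, so this suffices. STRIPS-HGN is a hypergraph network whose facts are nodes and whose actions are hyperedges (with delete effects dropped), and one round consists of a hyperedge update aggregating incident fact embeddings into each action, followed by a node update aggregating incident action embeddings into each fact. The key observation is that $\slg$ is exactly the bipartite incidence structure of this hypergraph, augmented with the edge labels $\pre, \add$ (and the extra label $\del$). I would therefore simulate each STRIPS-HGN round by a constant number of $\slg$ MPNN layers: one layer routing messages along $\pre$ and $\add$ edges from proposition nodes to action nodes to reproduce the hyperedge update, and one layer routing messages from action nodes back to proposition nodes to reproduce the node update, using the edge labels to keep the precondition and effect roles separate. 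Because $\comb^{(t)}$, $f^{(t)}$ and $\aggr^{(t)}$ are arbitrary almost-everywhere-differentiable functions, each of STRIPS-HGN's learned update maps can be reproduced, and the global STRIPS-HGN feature is recovered by the graph readout $\Phi$ (or, if needed mid-computation, by a virtual node).

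The delicate point, and the step I expect to be the main obstacle, is that STRIPS-HGN's hyperedge aggregation is not permutation invariant: it orders and pads the incident facts up to a fixed arity bound and then applies position-specific weights, whereas $\aggr^{(t)}$ on $\slg$ is permutation invariant. I would resolve this using the arity bound itself: each action has at most a fixed number of incident facts, so the relevant neighbourhoods are bounded multisets, and over bounded multisets there is an injective permutation-invariant encoding (a Deep-Sets-style sum of a sufficiently rich per-element map) realisable as an $\aggr^{(t)}$. From such an injective code a subsequent $\comb^{(t)}$ can reconstruct STRIPS-HGN's canonically ordered, padded fact tuple and apply its position-specific function; ties in the canonical order occur only between facts indistinguishable to $\S_\Theta$, so the reconstruction agrees with $\S_\Theta$ on the quantities it actually uses. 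Finally, STRIPS-HGN discards edges when an action exceeds the arity bound, so it sees only a substructure of $\slg$; since the MPNN retains all edges, no information is lost. The $\flg$ case follows identically after treating the variable--value fact nodes as the STRIPS facts.

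For the second half I would build a separating pair directly from STRIPS-HGN's blindness to delete lists. Take two tasks $\Pi_1$ and $\Pi_2$ that agree on their propositions, initial state, goal, and all action preconditions and add lists, and differ only in a single delete effect. Since STRIPS-HGN's hypergraph encoding drops delete effects, the two tasks induce the same STRIPS-HGN input, so $\S_\Theta(\Pi_1) = \S_\Theta(\Pi_2)$ for every $\Theta$. In contrast, the $\slg$ (resp.\ $\flg$) graphs differ, carrying a $\del$-labelled edge in one but not the other, so I would exhibit a $\Phi$ whose message function is nonzero only on $\del$ edges and whose readout sums these contributions, giving $\F_\Phi(\Graph(\Pi_1)) \neq \F_\Phi(\Graph(\Pi_2))$. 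This establishes strict refinement and completes the theorem.
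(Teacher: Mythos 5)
Your overall decomposition matches the paper's: simulate STRIPS-HGN with an MPNN on \slg{}/\flg{} for the first claim, then separate the two models via delete effects for the second. Your second half is essentially the paper's own argument (the paper takes a task $\Pi$ and its delete relaxation $\Pi^+$, which STRIPS-HGN provably cannot tell apart, and lets the MPNN react to the $\del$-labelled edges); that part is correct.

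The gap is in your treatment of the non-permutation-invariant aggregation, which you yourself identify as the delicate step. STRIPS-HGN orders the facts incident to a hyperedge by an \emph{external} convention (the presentation order of the neighbours), not by any canonical function of their current embeddings --- indeed, if it sorted incoming messages by value, that aggregation \emph{would} be permutation invariant (sorting is a canonical form of a multiset), contradicting the premise, stated repeatedly in the paper, that it is not. Consequently your Deep-Sets-style injective multiset code cannot ``reconstruct STRIPS-HGN's canonically ordered, padded fact tuple'': from the multiset you can only recover a value-sorted tuple, and whenever the external order disagrees with the value-sorted order on facts with \emph{distinct} embeddings, the position-specific weights get applied to a different permutation than the one $\S_\Theta$ uses, so the simulation is not exact. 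Your tie-breaking remark only covers facts with equal embeddings, which is not the problematic case. The failure is structural, not just technical: a permutation-invariant MPNN computes an isomorphism-invariant function of $\Graph(\Pi)$, whereas $\S_{\Theta}$ with an external ordering need not be (two tasks related by renaming propositions have isomorphic \slg{}s but can receive different STRIPS-HGN outputs), so no permutation-invariant $\F_{\Phi}$ can refine the partition induced by $\S_{\Theta}$, and the first half of the theorem cannot be completed along your route. The paper sidesteps this precisely because its MPNN definition only requires the aggregator to be ``usually'' permutation invariant: its proof lets $\F_{\Phi}$ reuse STRIPS-HGN's own ordering-based aggregation (together with a virtual node for the global feature and two MPNN layers per hypergraph layer, exactly as you propose), which is what makes the simulation exact.
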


The first of our negative results is that MPNNs cannot learn $\hadd$ or $\hmax$ on the lifted \llg{} graph.
This is due to the graph being too condensed in the lifted version so that MPNNs cannot extract certain information for computing these heuristics.
The proof idea is to find a pair of planning tasks which appear symmetric to MPNNs in the \llg{} representation but have different $\hmax$ and $\hadd$ values.

\begin{theorem}[MPNNs cannot learn $\hadd$ and $\hmax$ on lifted
graphs]\label{thm:lifted_h_add_maxx} Let $h \in \set{\hadd, \hmax}$.
There exists a pair of planning tasks $\Pi_1$ and $\Pi_2$ with $h(\Pi_1) \not= h(\Pi_2)$ such that for any set of parameters $\Theta$ for an MPNN we have $\F_{\Theta}(\llg(\Pi_1)) = \F_{\Theta}(\llg(\Pi_2))$.
\end{theorem}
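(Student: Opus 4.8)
The plan is to exploit the well-known fact that MPNNs are no more expressive than the one-dimensional Weisfeiler--Leman (colour refinement) algorithm, and then to exhibit a pair of tasks whose \llg{} graphs colour refinement cannot tell apart while their heuristic values differ.

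First I would state the governing principle precisely. Run colour refinement on a node- and edge-labelled graph by initialising each node's colour with its feature vector $\X(u)$ and refining $\col^{(t+1)}(u)$ from $\col^{(t)}(u)$ together with the multiset $\mset{(\col^{(t)}(v), \iota) \mid \edge{u,v}{\iota} \in \Ne(u)}$. A standard argument shows that the node embeddings $\h_u^{(t)}$ produced by any MPNN $\F_\Theta$ are a refinement of these colours: if $\col^{(t)}(u) = \col^{(t)}(u')$ then $\h_u^{(t)} = \h_{u'}^{(t)}$. Consequently, if colour refinement assigns the same stable multiset of colours to $\llg(\Pi_1)$ and $\llg(\Pi_2)$, then after any number of layers the two graphs carry the same multiset of node embeddings, and since the readout $\Phi$ is permutation invariant we obtain $\F_\Theta(\llg(\Pi_1)) = \F_\Theta(\llg(\Pi_2))$ for every $\Theta$. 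It therefore suffices to build two tasks that are colour-refinement equivalent under \llg{} but have $h(\Pi_1) \neq h(\Pi_2)$.

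Second, for the construction I would keep the predicates $\predicates$, action schemas $\actionschema$, and object set $\objects$ identical across $\Pi_1$ and $\Pi_2$, so that the entire schema subgraph (Fig.~\ref{fig:llg}(b)) together with the predicate and object nodes coincides; only the instance subgraph encoding $s_0 \cup G$ differs. The difference I would engineer is to have the state atoms of a single binary predicate realise two \emph{non-isomorphic but colour-refinement-equivalent} relational structures --- the canonical example being a connected structure versus a disjoint union of isomorphic components (e.g.\ one long cycle versus several shorter cycles), which are regular with identical local statistics and hence indistinguishable to colour refinement. The goal atoms, and any $s_0$ atoms of other predicates, would be placed in a vertex-transitive fashion so that the $s_0$ and $G$ coordinates of $\X$, as well as the position features $\overline{\pe}$ on the argument nodes, are distributed identically in both graphs and introduce no distinguishing colour seed.

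I would then discharge the two obligations. For colour-refinement equivalence, I would argue that every node of each type (object, predicate, proposition, and argument node) has, in both graphs, the same colour and the same multiset of neighbour colours at every refinement step, which reduces to the regularity of the two underlying structures and the symmetry of the markings, so the stable colourings and their multisets coincide. For the heuristic separation, I would compute $\hadd$ and $\hmax$ directly: since both are defined through relaxed reachability, choosing a domain whose value is governed by a \emph{global} connectivity property (such as whether the goal atoms lie in the same delete-relaxed reachable component as the source atoms) forces $h(\Pi_1) \neq h(\Pi_2)$, with one task having a strictly larger, or even infinite, value. The main obstacle is the tension between these two requirements: the position features record argument indices and the $s_0/G$ coordinates mark which atoms are true or goal, so any asymmetry in how objects sit in the two structures, or in how the markings are placed, immediately lets colour refinement separate the graphs and breaks the bound. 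The delicate part is therefore to make the heuristic-relevant distinction a genuinely global one --- invisible to the local neighbourhood multisets that colour refinement, and hence MPNNs, can see --- while keeping every local feature profile identical across the pair, and to verify that the \emph{stable} colourings truly coincide rather than merely the initial ones.
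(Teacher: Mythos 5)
Your proposal is correct and takes essentially the same route as the paper's proof: it bounds MPNNs by colour refinement on the edge-labelled \llg{} (the paper invokes Lemma~2 of Xu et al.\ in contrapositive form), then exhibits two tasks sharing $\predicates$, $\actionschema$ and $\objects$ whose instance subgraphs realise non-isomorphic but colour-refinement-equivalent structures, separated by a global reachability property. The paper's concrete pair is the minimal instance of exactly your plan: $s_0^{(1)}=\seta{Q(o_1,o_2),Q(o_2,o_1)}$ (a connected cycle through objects and argument nodes) versus $s_0^{(2)}=\seta{Q(o_1,o_1),Q(o_2,o_2)}$ (a disjoint union of two shorter cycles), with identical goal $G=\seta{W(o_1,o_2),W(o_2,o_1)}$, making $\Pi_1$ solvable and $\Pi_2$ unsolvable, so $h(\Pi_1)$ is finite while $h(\Pi_2)=\infty$.
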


Next, we have that MPNNs cannot learn $\hplus$ and thus $\hopt$ on any of our graphs.
This result is not unexpected given that the expressiveness of MPNNs is bounded by the graph isomorphism class $\GI$ whose hardness is known to be in the low hierarchy of $\NP$, unlike $\hplus$ which is $\NP$-complete.
Similarly to the previous theorem, the proof follows the technique of finding a pair of planning tasks with different $\hplus$ values that are indistinguishable by MPNNs on any of our graphs.

\begin{theorem}[MPNNs cannot learn $\hplus$ or $\hopt$ with our graphs]\label{thm:h_plus_opt} Let $h \in \set{\hplus, \hopt}$ and $\Graph \in \set{\slg, \flg, \llg}$.
There exists a pair of planning tasks $\Pi_1$ and $\Pi_2$ with $h(\Pi_1) \not= h(\Pi_2)$ such that for any set of parameters $\Theta$ for an MPNN we have $\F_{\Theta}(\Graph(\Pi_1)) = \F_{\Theta}(\Graph(\Pi_2))$.
\end{theorem}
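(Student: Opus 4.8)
The plan is to lean on the standard fact that the separating power of MPNNs is bounded by one-dimensional Weisfeiler--Leman refinement (color refinement) on edge-labelled, feature-augmented graphs: if two such graphs admit identical stable colorings (equal multisets of refined colors at convergence), then every MPNN, for \emph{any} parameters and any number of layers, returns identical node-embedding multisets and hence an identical graph readout. It therefore suffices, for each $\Graph \in \set{\slg, \flg, \llg}$ and each $h \in \set{\hplus, \hopt}$, to exhibit a pair $\Pi_1,\Pi_2$ whose graphs are 1-WL-indistinguishable yet whose heuristic values differ. Since the grounded graphs are the more expressive of the three (Fig.~\ref{fig:hierarchy}), the crux is a pair that fools even them; the \llg{} case can then reuse the same idea, in the spirit of Theorem~\ref{thm:lifted_h_add_maxx}.

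The construction I would use is a delete-free ``covering gadget'' (delete-freeness forces $\hopt = \hplus$, so one pair settles both heuristics). Both tasks share a start proposition $s \in s_0$, six item propositions $i_0,\ldots,i_5$, a goal proposition $g$ with $G = \set{g}$, six achiever actions $c_0,\ldots,c_5$ each with precondition $\set{s}$ and adding two items, and one goal action $b$ with precondition $\set{i_0,\ldots,i_5}$ adding $g$. The \emph{only} difference is the achiever--item wiring: in $\Pi_1$ the incidences form a single bipartite $12$-cycle ($c_j$ adds $i_j, i_{j+1 \bmod 6}$), while in $\Pi_2$ they form two disjoint bipartite $6$-cycles. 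Because $g$ is obtainable only through $b$, which needs all six items, and each achiever supplies two, the relaxed-plan cost equals one plus the minimum number of achievers covering all items; a single cycle has covering number $3$, whereas two length-three cycles each require $2$, for a total of $4$. Hence $\hplus(\Pi_1) = 4 \ne 5 = \hplus(\Pi_2)$, and by delete-freeness $\hopt$ differs identically.

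For indistinguishability I would argue that all node features agree as multisets and every labelled-neighbourhood signature matches: $s$ is a precondition of all six achievers, $b$ is preconditioned by all six items and adds $g$, each achiever adds exactly two items, and each item is added by exactly two achievers and preconditions $b$ --- uniformly across both tasks. The sole structural difference is the global cycle structure of the $2$-regular bipartite achiever--item subgraph, $C_{12}$ versus $2\,C_6$, which color refinement provably cannot detect: every item keeps one color and every achiever another, and the uniform attachments to $s$, $b$, $g$ give refinement no foothold to localize the difference. Thus $\slg(\Pi_1)$ and $\slg(\Pi_2)$ share a stable coloring, giving $\F_\Theta(\slg(\Pi_1)) = \F_\Theta(\slg(\Pi_2))$ for every $\Theta$.

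Finally I would port the instances: cast them in FDR (one binary variable per item, achiever effects setting items true) so that \flg{} inherits the same regularity, and encode the gadget in a lifted domain whose objects form one cycle versus two cycles to obtain 1-WL-indistinguishable \llg{}s. I expect the lifted case to be the main obstacle: one must verify that the auxiliary \llg{} machinery --- predicate, schema-argument and argument-index nodes carrying the $\pe$ features --- attaches symmetrically, so that the index encodings do not inadvertently separate the two object-cycle arrangements, and one must re-derive the covering count through the grounded reading of the lifted task. The grounded verifications are routine once the gadget is fixed; the delicate design step is choosing auxiliary nodes that cannot help WL trace the global cycle structure.
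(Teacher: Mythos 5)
Your core strategy is exactly the paper's: construct a delete-free pair (so $\hplus=\hopt$ and one construction settles both heuristics), make the two tasks differ only in whether a $2$-regular incidence structure forms one long cycle or two disjoint short ones, conclude colour-refinement (1-WL) equivalence of the graphs, and invoke the contrapositive of Lemma~2 of \cite{xu2018powerful} to bound what any MPNN can distinguish. Your gadget itself checks out: the covering counts are right (three achievers suffice on the $12$-cycle, two per component are forced on the two $6$-cycles, giving $\hplus=4$ versus $5$), and the two \slg{}s share a stable colouring because every item node sees two add-edges to achiever nodes and one precondition edge to $b$, and every achiever sees one precondition edge to $s$ and two add-edges to items, identically in both tasks. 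The paper's gadget is smaller (four propositions and six actions; through the goal propositions it forms one $8$-cycle versus two $4$-cycles, giving $\hopt=3$ versus $4$) but rests on precisely the same cycle-doubling principle, so for the grounded graphs your proposal is a correct variant of the paper's argument.

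The genuine gap is the \llg{} case, which the theorem statement requires and which you explicitly leave unresolved. Two points. First, your claim that a pair fooling the grounded graphs is ``the crux'' because they are more expressive is not a valid inference: Fig.~\ref{fig:hierarchy} orders graphs by which \emph{heuristics} MPNNs can learn on them, not by distinguishing power, and \slg{}-indistinguishability does not formally imply \llg{}-indistinguishability (the \llg{} encodes predicate-level structure the \slg{} lacks). Second, the route you sketch --- a lifted domain whose \emph{objects} form one cycle versus two --- is exactly the delicate construction you worry about, since argument nodes and $\pe$ index features hand extra structure to colour refinement. The clean repair, implicit in the paper's proof, is to read the very same propositional gadget as a lifted task whose predicates and action schemas are all nullary: then $\objects=\emptyset$, there are no argument or index nodes, every $\overline{\pe}$ feature is the zero vector, and the \llg{} is just the \slg{} with each labelled edge subdivided by a $p_{a,f}$ node plus uniformly attached state/goal pendant nodes; subdivision turns the $12$-cycle versus two $6$-cycles into a $24$-cycle versus two $12$-cycles, and your colour-refinement argument transfers verbatim. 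Without this (or some other completed lifted construction), your proof covers only two of the three graph classes in the statement.
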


One may ask if it is possible to learn any approximation of $\hplus$ or $h^*$ on all planning problems.
Unfortunately, it is not possible to learn either absolute or relative approximations.
This is formalised in the following theorem, where the proof consists of a class of planning task pairs generalising the previous example.

\begin{theorem}[MPNNs cannot learn any approximation of $\hplus$ or $\hopt$]\label{thm:approx} Let $h \in \set{\hplus, \hopt}$, $\Graph \in \set{\slg, \flg, \llg}$ and $c > 0$.
There exists a pair of planning tasks $\Pi_1$ and $\Pi_2$ with $h(\Pi_1) \not= h(\Pi_2)$ such that for any set of parameters $\Theta$ for an MPNN we do not have $\abs{\F_{\Theta}(\Graph(\Pi_1)) - h(\Pi_1)}\leq c \wedge \abs{\F_{\Theta}(\Graph(\Pi_2)) - h(\Pi_2)}\leq c$.
Also, for any set of parameters we do not have $\abs{1 - {\F_{\Theta}(\Graph(\Pi_1))}/{h(\Pi_1)}}\leq c \wedge \abs{1 - {\F_{\Theta}(\Graph(\Pi_2))}/{h(\Pi_2)}}\leq c$.
\end{theorem}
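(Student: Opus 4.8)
The plan is to upgrade the single indistinguishable pair supplied by Theorem~\ref{thm:h_plus_opt} into a \emph{parameterised family} of pairs whose heuristic gap and heuristic ratio both grow without bound, and then to exploit the fact that an MPNN is forced to return one common value on any pair it cannot distinguish. First I would isolate the logical skeleton: whenever $\F_{\Theta}(\Graph(\Pi_1)) = \F_{\Theta}(\Graph(\Pi_2))$ for every $\Theta$ — which holds precisely when the two graphs are not separated by colour refinement — we may write $y$ for this common output, and then argue that a single $y$ cannot lie close to two sufficiently far-apart targets.

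For the absolute statement, suppose some $\Theta$ achieved $\abs{y - h(\Pi_1)} \leq c$ and $\abs{y - h(\Pi_2)} \leq c$ simultaneously; the triangle inequality forces $\abs{h(\Pi_1) - h(\Pi_2)} \leq 2c$, so it suffices to exhibit an indistinguishable pair with $\abs{h(\Pi_1) - h(\Pi_2)} > 2c$. For the relative statement, the constraints $\abs{1 - y/h(\Pi_i)} \leq c$ place $y$ in the interval $[(1-c)h(\Pi_i), (1+c)h(\Pi_i)]$ for each $i \in \set{1,2}$; assuming $0 < h(\Pi_1) < h(\Pi_2)$ and $c$ in the meaningful range $c < 1$, these intervals are disjoint exactly when $h(\Pi_2)/h(\Pi_1) > (1+c)/(1-c)$, so it suffices to exhibit an indistinguishable pair whose heuristic ratio exceeds this fixed threshold.

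Both requirements are met by generalising the gadget behind Theorem~\ref{thm:h_plus_opt}, which already produces two tasks that agree under colour refinement yet differ in $\hplus$ (equivalently $\hopt$ on the delete-free reformulation) because a collection of subgoals is achievable by a single shared action in one task but demands separate actions in the other. Parameterising this by the number $k$ of subgoals, I would arrange that $h(\Pi_1^{(k)})$ stays bounded (shared achievement, hence bounded below by a positive constant) while $h(\Pi_2^{(k)})$ grows linearly in $k$ (separate achievement). Then both the gap and the ratio diverge, so for any prescribed $c$ a large enough $k$ defeats the relevant threshold. Crucially, replicating the gadget symmetrically duplicates every local neighbourhood identically on both sides, so the colour-refinement histograms of $\Graph(\Pi_1^{(k)})$ and $\Graph(\Pi_2^{(k)})$ coincide at every round just as they did at $k=1$; since MPNN expressiveness is bounded by colour refinement, $\F_{\Theta}(\Graph(\Pi_1^{(k)})) = \F_{\Theta}(\Graph(\Pi_2^{(k)}))$ persists for every $\Theta$, and in particular $h(\Pi_1^{(k)}) \neq h(\Pi_2^{(k)})$ once $k$ is large.

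The main obstacle is the joint bookkeeping: I must guarantee, uniformly across all three representations $\slg$, $\flg$ and $\llg$, that (i) the symmetric replication creates no edges that separate the two graphs and keeps their readout-pooled embeddings equal, and (ii) the heuristic values behave as claimed, one bounded and one linear in $k$. Pinning down $\hplus$ exactly is the delicate point, since it is $\NP$-hard in general; I would therefore engineer the gadget so that its optimal relaxed (and real) plans are transparent, making the shared-versus-separate accounting provable and the per-instance values computable in closed form. Turning the single fixed pair of Theorem~\ref{thm:h_plus_opt} into a family with controlled, diverging heuristics while never breaking colour-refinement indistinguishability — and achieving this for $\slg$, $\flg$ and $\llg$ at once — is exactly what elevates the argument from a restatement of the earlier non-expressiveness result to the stronger inapproximability claim.
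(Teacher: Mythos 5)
Your logical reduction is sound and matches the paper's implicit strategy: for the absolute part it suffices (by the triangle inequality) to exhibit indistinguishable pairs with $\abs{h(\Pi_1)-h(\Pi_2)} > 2c$, and for the relative part (restricting to $c<1$, as you reasonably do) it suffices to make the ratio exceed $(1+c)/(1-c)$, since a colour-refinement-equivalent pair forces a single common MPNN output. The genuine gap is in your construction. You have misread the gadget of Theorem~\ref{thm:h_plus_opt}: there the two goals are achieved by \emph{separate} actions in both tasks; what is shared in the cheaper task is the precondition \emph{proposition} ($g_3$ and $g_4$ can both be reached from $p_1$), not a shared achieving action. This matters quantitatively. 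In the natural $k$-subgoal generalisation, each goal must keep its own achieving actions in both tasks (if one task had a single action adding all $k$ goals and the other had $k$ single-add actions, the add-degrees of action nodes would differ and colour refinement would separate the graphs immediately). The resulting costs are $k+1$ (shared precondition) versus $2k$ (separate preconditions): the gap $k-1$ diverges, so your absolute claim goes through, but the ratio $2k/(k+1)$ is bounded by $2$, so your family fails the relative-approximation half of the theorem for every $c \geq 1/3$. Your stated target of keeping $h(\Pi_1^{(k)})$ bounded while the other grows linearly is unattainable along these lines, precisely because all $k$ goal propositions need their own achievers in both tasks.

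The missing idea --- and the way the paper closes exactly this hole --- is to make the shared resource \emph{expensive}, so that duplicating it multiplies cost rather than adding a constant. In the paper's construction each of $n$ goals $p(n,y)$ sits at the end of a length-$(n-1)$ chain of actions: in $\Pi_1$ every achiever of $p(n,y)$ requires the end of chain $y$, so all $n$ chains must be built, giving cost $n(n-1)+n = n^2$; in $\Pi_2$ the achievers $a_2(y,z)$ may draw their precondition from any chain $z$, so one chain suffices, giving cost $(n-1)+n = 2n-1$. Both the difference $(n-1)^2$ and the ratio $n^2/(2n-1)$ then diverge, while the grid symmetry keeps the two tasks colour-refinement equivalent under \slg{}, \flg{} and \llg{}. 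If you replace your flat replication with this chain-sharing scheme (or any variant in which the shared structure has cost growing with the parameter), the rest of your argument goes through unchanged.
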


We note that our theorems provide extreme upper and lower bounds on what MPNNs can learn with some of our graphs.
Although in the general case we have the negative result that we cannot learn $\hplus$ or $\hopt$, it is still possible to learn $\hopt$ on subclasses of planning tasks.
For example, \citet{staahlberg2022optimalpolicies} theoretically analyse which domains their MPNN architecture can learn $\hopt$ for, by using the well known result concerning the connection between MPNNs and 2-variable counting logics~\cite{cai1992optimal,barcelo2020logical}.
Furthermore, the results in this study concern MPNNs but there exist more expressive graph representation learning methods.
For example, under additional assumptions, the universal approximation theorem with random node initialisation by~\citet{abboud2021surprising} can be applied to learn $\hopt$.
Lastly, note that neither our results nor previous works examine the generalisability of learned heuristic functions.

\begin{table}
\renewcommand{\arraystretch}{0.75}
\centering
\scriptsize
\newcommand\macrofornuminstances[1]{\emph{#1}}
\setlength{\tabcolsep}{1.8pt}
\begin{tabularx}{\columnwidth}{c |l r |l r |l r| l} \toprule
domain & train & & validate & & test & & largest sol.
\\ \midrule
blocks & $b \in [3,10]$& \macrofornuminstances{40} & $b \in [11]$& \macrofornuminstances{3} & $b \in [15,100]$& \macrofornuminstances{90} & $b=75^{\ast}$ \\
ferry & $l,c\in[2,10]$& \macrofornuminstances{125}& $l,c\in[11]$& \macrofornuminstances{3} & $l,c \in [15,100]$ & \macrofornuminstances{90} & $l,c=100$ \\
gripper & $b\in[1,10]$& \macrofornuminstances{10} & $b \in[11]$& \macrofornuminstances{1} & $b \in [15,100]$ & \macrofornuminstances{18} & $b=100$ \\
n-puzzle & $n\in[2,4]$& \macrofornuminstances{100} & $n\in[5]$& \macrofornuminstances{3} & $n \in [5,9]$& \macrofornuminstances{50} & $n=6^{\ast}$ \\
sokoban & $n\in[5,7]$& \macrofornuminstances{60} & $n\in[8]$& \macrofornuminstances{3} & $n \in [8,12]$& \macrofornuminstances{90} & $n=12^{\ast}$ \\
spanner & $s,n\in[2,10]$& \macrofornuminstances{75} & $s,n \in[11]$& \macrofornuminstances{3} & $s,n\in[15,100]$& \macrofornuminstances{90} & $s,n=75$ \\
visitall & $n\in[3,10]$& \macrofornuminstances{24} & $n\in[11]$& \macrofornuminstances{3} & $n\in[15, 100]$& \macrofornuminstances{90} & $n=65$ \\
visitsome & $n\in[3,10]$& \macrofornuminstances{24} & $n\in[11]$& \macrofornuminstances{3} & $n\in[15, 100]$& \macrofornuminstances{90} & $n=95^*$ \\
\bottomrule\end{tabularx}
\caption{
Problem splits with sizes and number of tasks per domain.
Right most column indicates largest size problem solved with GOOSE.
Problem size is not completely correlated with difficult for domains marked $\ast$.
}\label{table:splits}
\end{table}

\section{Experiments}
We provide experiments in order to evaluate the effectiveness of our graph representations for use with both domain-dependent and domain-independent learning of heuristic functions, as well as to answer some open questions left behind in our theoretical discussion.
In order to do so, we introduce our GOOSE planner which combines graph generation, graph representation learning and domain-independent planning. Code is available at ~\cite{chen_2023_10404209}.

\subsubsection{GOOSE} The \textbf{G}raphs \textbf{O}ptimised f\textbf{O}r \textbf{S}earch \textbf{E}valuation (GOOSE) architecture represents planning tasks with one of the three graphs described previously (\slg{}, \flg{}, \llg{}) and uses MPNNs to learn heuristic functions for search.
During heuristic evaluation, GOOSE treats each state $s$ of a planning task $\gen{S,A,s_0,G}$ as a new planning subtask $\gen{S,A,s,G}$ which is then transformed into the chosen graph and fed into an MPNN.
We use a RGCN~\cite{schlichtkrull2018modeling} message passing step:
$
\h_u^{(t+1)} = \!
\sigma\Bigl(\mathbf{W}_{0}^{(t)}\h_u^{(t)} + 
\sum_{\iota \in \mathcal{R}} \bigoplus_{\edge{u,v}{\iota} \in \neighbours_{\iota}(u)} \mathbf{W}_{\iota}^{(t)} \h_v^{(t)}\Bigr),
$
where $\oplus$ denotes the aggregator over neighbours under different edge labels.
GOOSE uses the eager GBFS component of Fast Downward~\cite{helmert2006fast} for search, but calls the trained models for heuristic evaluation and parallelises the evaluation of successor states on GPUs for each opened node.
%
%
GOOSE only constructs graphs from static propositions computed by Fast Downward.

\newcommand{\scaleconst}{1}
\begin{figure}[t!]
\centering
\includegraphics[width=\scaleconst\columnwidth]{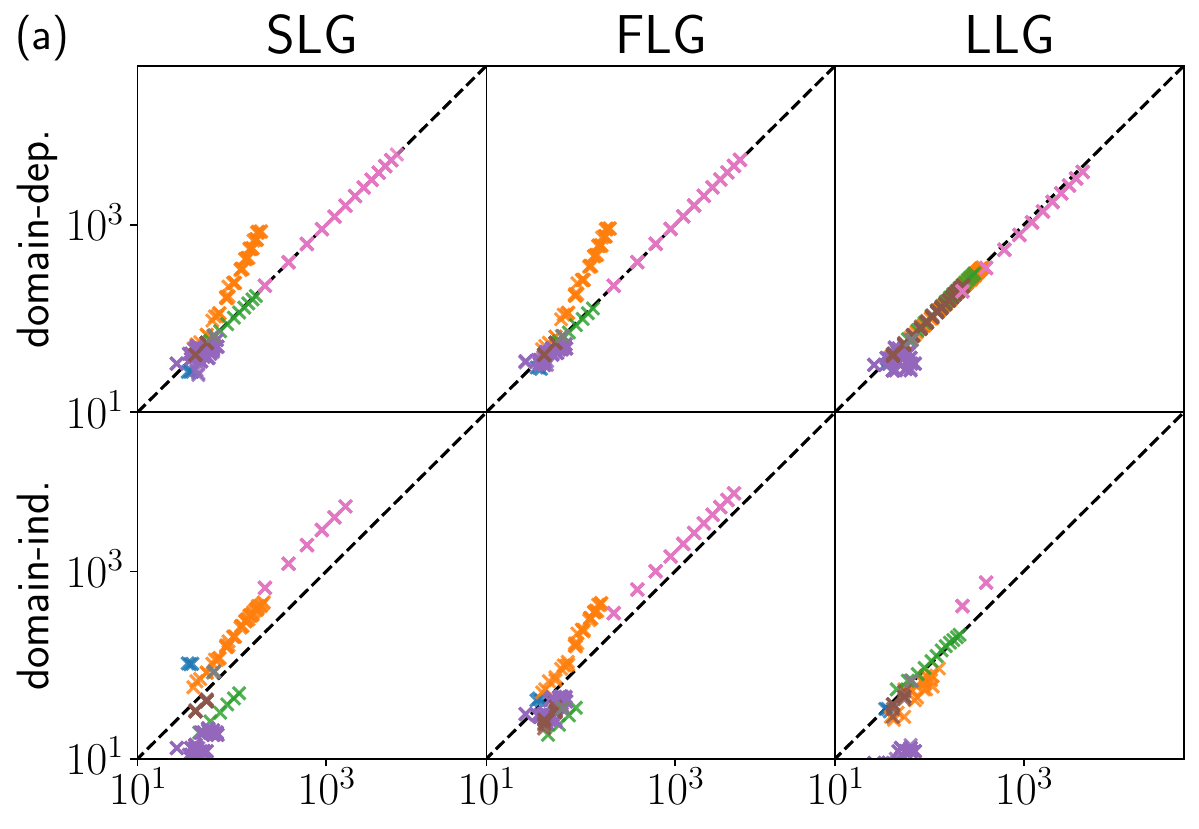}
\includegraphics[width=\scaleconst\columnwidth]{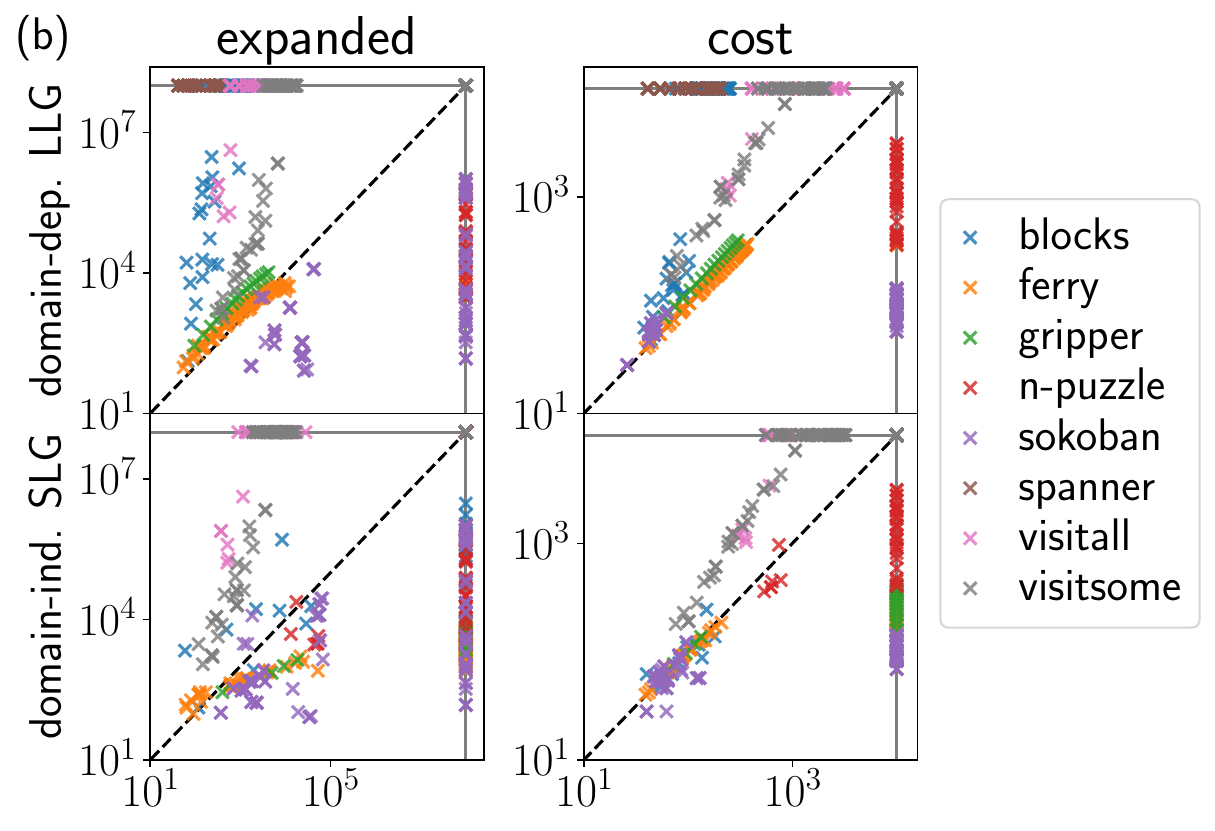}
\caption{
(a) GOOSE learned heuristics ($y$-axis) vs. $\hopt$ ($x$-axis).
No $n$-puzzle problem could have $h^*$ computed.
(b) $\hff$ ($y$-axis) vs.
GOOSE ($x$-axis) on number of expanded nodes (left) and plan cost (right).
Points on the bottom right triangles favour $\hff$ and on the top left triangles favour GOOSE.
Problems unsolved by a configuration get value set to the maximum of the plot's axis.
}
\label{fig:expanded_and_cost}
\label{fig:heuristic}
\end{figure}

\subsubsection{Setup} 
For domain-dependent heuristic learning, we train 5 models for each domain on optimal plans with problems specified in Tab.~\ref{table:splits}.
Each plan of length $h^*$ contributes states $s_0,s_1,\ldots,s_{g}$ with corresponding labels $h^*, h^*-1, \ldots,0$.
For domain-independent heuristic learning, we consider the problems and domains of the 1998 to 2018 IPC dataset, excluding the domains in Tab.~\ref{table:splits}. We train 5 models using optimal plans generated by scorpion \cite{seipp2020saturated} with a 30min cutoff time and unit costs.
%
%
In both settings, a model is trained with the Adam optimiser~\cite{kingma2014adam}, batch size 16, initial learning rate of 0.001 and MSE loss.
We schedule our learning rate by extracting 25\% of the training data and reducing the learning rate by a factor of 10 if the loss on this data subset did not decrease in the last 10 epochs.
Training is stopped when the learning rate becomes less than $10^{-5}$ on this subset, which often occurs within a few minutes.
Following a similar method to~\citet{ferber2020neural}, we select the best model for both settings by choosing the model which solves the most problems in the validation set (Tab.~\ref{table:splits}).
We break ties with the sum of number of expanded nodes, and the training loss.
%
%
We choose a hidden dimension of 64 with 8 message passing layers and the mean aggregator.
GOOSE is run with a single NVIDIA GeForce RTX 3090 GPU.

\subsubsection{Baselines} We evaluate against blind search, Fast Downward's implementation of eager GBFS with $\hff$, and domain-dependent STRIPS-HGN.
STRIPS-HGN was trained using the parameters described in the original paper but with the same dataset as GOOSE and is called from Fast Downward's eager GBFS for heuristic evaluation.
The other baselines are run on CPUs only.
All GOOSE configurations and baselines are run with a 600 second timeout and 8GB main memory.
We note that we also evaluated Powerlifted's~\cite{correa2020lifted} implementation of lifted eager GBFS with both $\hff$ and its extension with GOOSE but they do not offer any advantage over Fast Downward on the chosen problems.

\subsection{Results}
We recall from our theoretical study that it is not possible to learn any approximation of $\hopt$ over all planning tasks, although it may still be possible for certain planning subclasses or domains.
Furthermore, we can learn domain-dependent $\hopt$ heuristics for certain domains similarly to~\cite{staahlberg2022optimalpolicies}.
Thus, we propose and answer the following question empirically.

\subsubsection{How close to the optimal heuristic are our learned heurstics?}
To answer this question, we report the heuristic of the initial state $h(s_0)$ computed by our models on all planning problems for which we can compute the optimal heuristic either by a hard coded solver or an optimal planner in Fig.~\ref{fig:heuristic}(a).
In the domain-dependent training setting, GOOSE with \llg{} provides the best predictions over most domains except for Sokoban.
GOOSE achieves close to perfect heuristic estimates even as problems scale in size.
This can be explained by \llg{} encoding predicate information which the other graph representations do not have access to and for some domains, it suffices to count the predicates of true propositions to compute $\hopt$.
Meanwhile, MPNNs are not expressive enough to decode predicate information from the grounded graphs.
In the domain-independent training setting, GOOSE heuristics tend to overestimate $\hopt$ on VisitAll, whose perfect heuristic can often be computed by counting unreached goals, and underestimate on Sokoban.
The grounded graphs (\slg{} and \flg{}) underestimate on Gripper, Spanner and Sokoban.
This suggests GOOSE heuristics are overfitting on the training set for the more expressive grounded graphs as it is not possible to compute $\hopt$.
In particular, \flg{} is more prone to overfitting since it encodes additional planning task structure in the conversion of STRIPS to FDR.

\subsubsection{How useful are learned domain-dependent heuristics for search?}
To answer this question, we refer to Fig.~\ref{table:coverage} for a coverage table over all domains with various planners.
%
%
%
%
We notice that GOOSE with \llg{} trained in a domain-dependent fashion provides the best coverage on Blocksworld and Spanner, and is tied with Fast Downward's eager GBFS with \hff{} on Gripper.
GOOSE with \slg{} performs best on the grid based path finding domains VisitAll and VisitSome.
Meanwhile, \hff{} performs best on the remaining Ferry, $n$-puzzle and Sokoban domains.
%
%
However, all GOOSE configurations perform worse than blind search on Sokoban.
Even though it expands fewer nodes than blind search, the runtime cost of computing GOOSE heuristics is too high.
This may be due to the difficulty of the domain (\PSPACE-complete) as size increases, given that in problems with similar size to the training set GOOSE outperforms the other baselines.
STRIPS-HGN solves significantly fewer problems due to its slower evaluation on CPUs.
%

Fig.~\ref{fig:expanded_and_cost}(b) shows the number of node expansions and returned plan quality of the best performing domain-dependent GOOSE graph, \llg{}, against \hff{}.
In domains where one planner solves significantly more problems than the other, it also has fewer node expansions.
We also note that GOOSE generally has higher plan quality than
\hff{} over all problems which both planners were able to solve.

\renewcommand{\arraystretch}{0.75}
\begin{figure}[t!]
\newcommand{\domainwidth}{1.4cm}
\newcommand{\coveragewidth}{0.1775cm}
\scriptsize
\centering
\renewcommand{\header}[1]{#1}
\setlength{\tabcolsep}{2.5pt}
\newcommand{\asdf}{0.575cm}

\begin{tabularx}{\columnwidth}{l >{\raggedleft\arraybackslash}p{\asdf}>{\raggedleft\arraybackslash}p{\asdf}>{\raggedleft\arraybackslash}p{\asdf}>{\raggedleft\arraybackslash}p{\asdf}>{\raggedleft\arraybackslash}p{\asdf}>{\raggedleft\arraybackslash}p{\asdf}>{\raggedleft\arraybackslash}p{\asdf}>{\raggedleft\arraybackslash}p{\asdf}>{\raggedleft\arraybackslash}p{\asdf}>{\raggedleft\arraybackslash}p{\asdf}} \toprule
& \multicolumn{3}{c}{baselines} & \multicolumn{3}{c}{domain-dep.} & \multicolumn{3}{c}{domain-ind.} \\\cmidrule(lr){2-4} \cmidrule(lr){5-7} \cmidrule(lr){8-10} 
& \header{\blind} & \header{\hfftable} & \header{\shgn} & \header{\slg} & \header{\flg} & \header{\llg} & \header{\slg} & \header{\flg} & \header{\llg} \\ \midrule
blocks (90)&\zerocell{0}&\second{19}&\zerocell{0}&\zerocell{0}&\normalcell{6}&\first{62}&\third{9}&\normalcell{8}&\normalcell{6}\\
ferry (90)&\zerocell{0}&\first{90}&\zerocell{0}&\normalcell{32}&\third{33}&\second{88}&\normalcell{28}&\normalcell{22}&\normalcell{2}\\
gripper (18)&\normalcell{1}&\first{18}&\normalcell{5}&\third{9}&\normalcell{6}&\first{18}&\normalcell{5}&\normalcell{3}&\third{9}\\
n-puzzle (50)&\zerocell{0}&\first{36}&\zerocell{0}&\second{10}&\second{10}&\zerocell{0}&\normalcell{6}&\normalcell{3}&\zerocell{0}\\
sokoban (90)&\second{74}&\first{90}&\normalcell{10}&\normalcell{31}&\normalcell{29}&\normalcell{34}&\third{45}&\normalcell{40}&\normalcell{15}\\
spanner (90)&\zerocell{0}&\zerocell{0}&\zerocell{0}&\zerocell{0}&\zerocell{0}&\first{60}&\zerocell{0}&\zerocell{0}&\zerocell{0}\\
visitall (90)&\zerocell{0}&\normalcell{6}&\normalcell{25}&\second{46}&\first{50}&\third{44}&\normalcell{16}&\normalcell{41}&\zerocell{0}\\
visitsome (90)&\normalcell{3}&\normalcell{26}&\normalcell{33}&\second{72}&\normalcell{39}&\third{65}&\first{73}&\third{65}&\normalcell{15}\\
\bottomrule\end{tabularx}

\caption{
Coverage of planners and GOOSE over various domains.
%
%
Cell intensities indicate the top 3 planners per row.
}\label{table:coverage}
\end{figure}

\begin{figure}[t!]
\renewcommand{\header}[1]{#1}
\centering\scriptsize
\begin{tabularx}{\columnwidth}{l l Y Y Y Y Y Y} \toprule & & \multicolumn{3}{c}{domain-dep.} & \multicolumn{3}{c}{domain-ind.} \\ 
\cmidrule(lr){3-5} 
\cmidrule(lr){6-8} 
aggr. & L & \header{\slg} & \header{\flg} & \header{\llg} & \header{\slg} & \header{\flg} & \header{\llg} \\ \midrule
\multirow{4}{*}{mean}&4&\cellcolor{\colorofcell!11.955168119551683}{0.40}&\cellcolor{\colorofcell!12.844333748443338}{0.43}&\cellcolor{\colorofcell!28.281444582814444}{0.94}&\cellcolor{\colorofcell!5.753424657534247}{0.19}&\cellcolor{\colorofcell!4.632627646326276}{0.15}&\cellcolor{\colorofcell!5.491905354919055}{0.18}\\
&8&\cellcolor{\colorofcell!15.840597758405977}{\textbf{0.53}}&\cellcolor{\colorofcell!12.141967621419676}{0.40}&\cellcolor{\colorofcell!30.0}{\textbf{1.00}}&\cellcolor{\colorofcell!11.462017434620176}{\textbf{0.38}}&\cellcolor{\colorofcell!9.579078455790786}{0.32}&\cellcolor{\colorofcell!9.825653798256539}{0.33}\\
&12&\cellcolor{\colorofcell!13.188044831880449}{0.44}&\cellcolor{\colorofcell!11.140722291407224}{0.37}&\cellcolor{\colorofcell!25.516811955168123}{0.85}&\cellcolor{\colorofcell!11.170610211706103}{0.37}&\cellcolor{\colorofcell!9.466998754669989}{0.32}&\cellcolor{\colorofcell!6.313823163138232}{0.21}\\
&16&\cellcolor{\colorofcell!9.302615193026153}{0.31}&\cellcolor{\colorofcell!5.334993773349938}{0.18}&\cellcolor{\colorofcell!22.528019925280198}{0.75}&\cellcolor{\colorofcell!10.916562889165629}{0.36}&\cellcolor{\colorofcell!9.676214196762142}{\textbf{0.32}}&\cellcolor{\colorofcell!3.4744707347447075}{0.12}\\
\multirow{4}{*}{max}&4&\cellcolor{\colorofcell!13.748443337484433}{0.46}&\cellcolor{\colorofcell!14.981320049813203}{\textbf{0.50}}&\cellcolor{\colorofcell!26.74968866749689}{0.89}&\cellcolor{\colorofcell!9.840597758405977}{0.33}&\cellcolor{\colorofcell!8.60772104607721}{0.29}&\cellcolor{\colorofcell!8.891656288916563}{0.30}\\
&8&\cellcolor{\colorofcell!12.36612702366127}{0.41}&\cellcolor{\colorofcell!12.8293897882939}{0.43}&\cellcolor{\colorofcell!26.450809464508097}{0.88}&\cellcolor{\colorofcell!10.737235367372355}{0.36}&\cellcolor{\colorofcell!9.041095890410958}{0.30}&\cellcolor{\colorofcell!15.466998754669985}{\textbf{0.52}}\\
&12&\cellcolor{\colorofcell!10.841843088418432}{0.36}&\cellcolor{\colorofcell!12.8293897882939}{0.43}&\cellcolor{\colorofcell!23.98505603985056}{0.80}&\cellcolor{\colorofcell!3.698630136986302}{0.12}&\cellcolor{\colorofcell!7.202988792029888}{0.24}&\cellcolor{\colorofcell!11.61892901618929}{0.39}\\
&16&\cellcolor{\colorofcell!12.186799501867995}{0.41}&\cellcolor{\colorofcell!10.699875466998755}{0.36}&\cellcolor{\colorofcell!15.952677459526775}{0.53}&\cellcolor{\colorofcell!1.7185554171855542}{0.06}&\cellcolor{\colorofcell!7.1955168119551685}{0.24}&\cellcolor{\colorofcell!5.940224159402242}{0.20}\\

\bottomrule\end{tabularx}

\caption{
Total coverage normalised per domain of GOOSE over various parameters and training paradigms, and normalised again by the coverage of the best performing configuration.
Higher scores are better and the maximum score is 1.
The best scores per \emph{column} are highlighted in bold.
}\label{table:params}
\end{figure}

\subsubsection{How useful are learned domain-independent heuristics for search?}
We again refer to Fig.~\ref{table:coverage} for the coverage of GOOSE trained with domain-independent heuristics.
With the exception of Sokoban, domain-independent GOOSE outperforms blind search which suggests that the learned domain-independent heuristics have some informativeness.
This is supported by Fig.~\ref{fig:heuristic}(a) which shows that in most domains domain-independent heuristics provide a mostly-linear approximation of $\hopt$.
Most notably, domain-independent grounded graphs still outperform $\hff$ on VisitAll and VisitSome, and domain-independent \llg{} is able to solve some Spanner problems.

The best performing domain-independent GOOSE graph with 8 message passing layers and mean aggregator is the grounded graph \slg{}.
It provides enough information to learn domain-independent heuristics with MPNNs in comparison to \llg{}, but also does not provide too much information to prevent overfitting in comparison to \flg{} which computes additional structure.
Domain-independent GOOSE with \slg{} returns better quality plans, and expands fewer nodes than \hff{} on VisitAll, VisitSome, and more than half the Blocksworld instances which both planners were able to solve.
Domain-independent GOOSE also outperforms or ties with \emph{domain-dependent} STRIPS-HGN across all domains except VisitAll.
However, domain-independent GOOSE generally expands more nodes and returns lower quality plans than their domain-dependent trained variants with the same graph.

\subsubsection{How important is finding the right graph neural network parameters?}
We report the normalised coverage of GOOSE with hyperparameters $L\in\{4,8,12,16\}$ layers and $\oplus\in\{\max,\text{mean}\}$ aggregator in Fig.~\ref{table:params}.
%
We omitted results with the sum aggregator as it yielded unstable training and poor predictions.
Increasing the number of layers theoretically improves informativeness and accuracy of predictions but requires longer evaluation time and is more difficult to train.
There is no single set of parameters that performs well over all graphs and training settings.
Generally 4 or 8 layers result in similar coverages for domain-dependent training, and 8 or 12 layers for domain-independent training, while increasing the number of layers beyond this results in worse performance due to the aforementioned reasons.
We note that the effectiveness of max and mean aggregations vary with the graph representation and domain as both aggregators lose information in different ways.
However, in the domain-dependent setting, \llg{} with the mean aggregator generally outperforms the max aggregator given that the model can recover the information lost during normalisation through the grouping of edge labels and node types.

\subsubsection{How long do GOOSE evaluations take?}
GOOSE on a single core CPU takes 0.2-0.9s to perform a full GNN evaluation on grounded graphs, and 0.2-0.3s on lifted graphs.
With \emph{optimal} GPU usage, grounded graphs take 0.1-5ms per state evaluation and lifted graphs take 0.07-0.7ms.
Optimal usage is achieved when the batch size is greater than 32 for lifted graphs and 4 for grounded graphs.
In our experiments, the grounded graphs were able to optimally use the GPU while the lifted graphs were not, resulting in a higher average state evaluation time.
We note that evaluating states of successor nodes further in the queue to increase the batch size is also not optimal as the expanded states may never be evaluated in sequential GBFS.
Evaluation of heuristics on GPUs is almost always faster than on CPU due to the parallel execution of GNN matrix and scatter operations.

\section{Conclusion}
We have constructed various novel graph representations of planning problems for the task of learning domain-independent heuristics.
In particular we provide the first domain-independent graph representation of lifted planning.
All our new models are also complemented by a theoretical analysis of their expressive power in relation to domain-independent heuristics and the previous work on learning domain-independent heuristics, STRIPS-HGN.
We also construct the GOOSE planner using heuristic search with heuristics learned from our new graph representations.
GOOSE has also been optimised for runtime with the use of GPU batch evaluation and is able to solve significantly larger problems than those seen in the training set, vastly surpassing STRIPS-HGN learned heuristics, and outperforming the $\hff$ heuristic on several domains.
It remains for future work to implement search algorithms used by stronger satisficing planners in GOOSE, and to optimise GPU utilisation when computing heuristics.
Furthermore GOOSE can be extended to predict deadends alongside a heuristic for further pruning the search space.
Lastly, we aim to improve the expressiveness of learned heuristics by leveraging stronger graph representation learning techniques.

\section*{Acknowledgements}
The authors would like to thank the reviewers and Rostislav Hor\v{c}\'{i}k for their comments. 
The computational resources for this project were partially provided by the Australian Government through the National Computational Infrastructure (NCI) under the ANU Startup Scheme. 
Sylvie Thi{\'{e}}baux's work was supported by Australian Research Council grant DP220103815, by the Artificial and Natural Intelligence Toulouse Institute (ANITI), and by the European Union's Horizon Europe Research and Innovation program under the grant agreement TUPLES No. 101070149. 

\bibliography{bib.bib}

\appendix
\section{Proofs of Theorems}

This appendix includes the proofs of all theorems. For ease of reference, theorem statements are also repeated here.

\begin{algorithm}
  \caption{Naive dynamic programming for computing $\hadd$ and $\hmax$}\label{alg:naivedp}
  \KwData{Propositional STRIPS planning task $\Pi=\probstrips$, desired heuristic $h \in \set{\hadd, \hmax}$}
  \KwResult{$h(s) \in \N$}
  \lIf{$h=\hadd$}{$\oplus \la \sum$}
  \lElseIf{$h=\max$}{$\oplus \la\max$}
  $h^{(0)}[p] \la 0, \quad\forall p \in s_0$ \\
  $h^{(0)}[p] \la \infty, \quad\forall p \in P\setminus s_0$ \\
  \For{$i=1,\ldots$}{
    \For{$a \in A$}{
      $h^{(i)}[a] \la \oplus_{p \in \pre(a)} h^{(i-1)}[p]$ \label{alg:line:ha}\\
    }
    \For{$p \in P$}{
      $h^{(i)}[p] \la \min\biglr{h^{(i-1)}[p], \min_{a \in A, p \in \add(a)} h^{(i)}[a] + c(a)}$ \label{alg:line:hp}\\
    }
    \If{$h^{(i)} = h^{(i-1)}$}{
      \Return{$\oplus_{p \in g} h^{(i)}[p]$} \\
    }
  }
\end{algorithm}

\begin{theorem}[MPNNs can learn $\hadd$ and $\hmax$ on grounded graphs]\label{thm:ground_h_add_max}
  Let $L, B \in \N$, $\Graph \in \set{\slg, \flg}$, $\e > 0$ and $h \in \seta{\hadd, \hmax}$. Then there exists a set of parameters $\params$ for an MPNN $\F_{\params}$ such that for all planning tasks $\Pi$, if naive dynamic programming for computing $h$ (Alg.~\ref{alg:naivedp}) converges within $L$ iterations for $\Pi$, and $h(s_0) \leq B$, then we have $\abs{h(s_0) - \F_{\params}(\Graph(\Pi))} < \e$.
\end{theorem}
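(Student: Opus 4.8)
The plan is to build the MPNN so that its message-passing layers simulate the naive dynamic programming of Alg.~\ref{alg:naivedp} step by step, and then to invoke the neural network approximation theorem to realise the required update maps. I focus on \slg{}; the argument for \flg{} is identical once the value nodes $d\in D_v$ are identified with propositions and the effect edges $E_{\eff}$ with add edges, while the $\varval$ edges merely carry neutral messages. Throughout I assume unit action costs, so that the cost term in Alg.~\ref{alg:naivedp} is the constant $1$ and need not be read from the graph (which carries no cost feature). Each node embedding will hold a single scalar coordinate storing the current DP value $h^{(i)}[\cdot]$ together with the type and $s_0$/$G$ bits already present in $\X$, propagated unchanged. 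The initial embedding realises lines~2--3: a proposition is set to $0$ if its $s_0$-bit is on and to a cap $M$ otherwise, where $M>B$ is a fixed threshold (depending only on $L$ and $B$) that stands in for $\infty$; I re-cap every value at $M$ after each update so all coordinates stay in the compact interval $[0,M]$ for every task.

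The key step is to dedicate two consecutive layers to each DP iteration. In the odd layer $2i-1$ I update only action nodes, aggregating over their $\pre$-neighbours: the message passes $\h_v$ along $\pre$-edges and a neutral element otherwise, the aggregation is $\sum$ for $\hadd$ and componentwise $\max$ for $\hmax$ (realising line~\ref{alg:line:ha}), and the combine map caps the result at $M$ while leaving proposition nodes unchanged. In the even layer $2i$ I update only proposition nodes, aggregating over their $\add$-neighbours: the message sends $\h_a+1$ along $\add$-edges, the aggregation is $\min$ (obtained from $\max$ by negation, since $\min_a x_a=-\max_a(-x_a)$), and the combine map takes the minimum with the node's previous value (realising line~\ref{alg:line:hp}); action nodes are left unchanged. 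Gating the updates by the node-type bits lets a single permutation-invariant aggregator per layer implement the asymmetric action/proposition updates, and using $2L$ layers reproduces exactly the $\le L$ iterations of Alg.~\ref{alg:naivedp}, including the Gauss--Seidel ordering in which propositions see the freshly computed action values. A final readout $\Phi$ selects the goal-bit proposition nodes and aggregates their scalar coordinate by $\sum$ (for $\hadd$) or $\max$ (for $\hmax$), matching the return value of the algorithm.

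It remains to realise the combine and message maps by MLPs. Each is a fixed, task-independent piecewise-linear function of its inputs (affine shifts, $\min/\max$ with the constant $M$ or with a sibling coordinate, and gating by the type bits), so on the compact domain $[0,M]$ the universal approximation theorem~\cite{cybenko1989approximation,hornik1989multilayer} yields parameters realising each to within any prescribed per-layer error $\d$ (indeed these maps are represented exactly by ReLU units). Bounding the Lipschitz constants of the chosen aggregations and combine maps, the per-layer errors accumulate to at most $\e$ over the fixed number $2L$ of layers together with the readout, provided $\d$ is chosen small enough, giving $\abs{h(s_0)-\F_{\params}(\Graph(\Pi))}<\e$.

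The main obstacle is obtaining a single parameter set $\params$ that works \emph{uniformly} over all admissible tasks despite action arities being unbounded: for $\hadd$ the pre-aggregation sum over a precondition or add set is not bounded across tasks, so one cannot simply restrict to a compact domain before aggregating. This is resolved by capping immediately after each aggregation, so that the quantity required to stay in $[0,M]$ is the \emph{post-cap} per-node value rather than the raw sum, and by noting that the only nonlinearity acting on the possibly-large aggregate is the saturation $x\mapsto\min(x,M)$, which a ReLU unit represents exactly for all $x\ge 0$. The second delicate point is the bookkeeping of $M$: it must be large enough that every value which is genuinely $\le B$ in the exact computation is never clipped, while any truly unreachable or larger-than-$B$ value is driven to $M$ and thereafter behaves as $\infty$ under $\min$. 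Choosing $M$ as a suitable function of $L$ and $B$ secures this, and conditioning on $h(s_0)\le B$ guarantees that all goal-relevant values remain within the exact range.
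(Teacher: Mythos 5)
Your proposal is correct and follows essentially the same route as the paper's proof: simulating the naive dynamic program with $2L$ alternating action/proposition message-passing layers, a finite constant standing in for $\infty$, min emulated as negated max, a goal-gated readout (sum for $\hadd$, max for $\hmax$), and the universal approximation theorem to realise the piecewise-linear update maps. The one place you go beyond the paper is the explicit re-capping at $M$ after every aggregation together with the observation that the saturation is exactly ReLU-representable; this cleanly handles the unbounded precondition sums in the $\hadd$ case, a point the paper passes over by simply asserting that all function domains lie in the ball of radius $B$.
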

\begin{proof}
  The main idea of the proof is that we can encode Alg.~\ref{alg:naivedp} for computing $h$ into an MPNN using a correct choice of continuous bounded functions and aggregation operators and using the approximation theorem to find parameters in order to achieve the desired function. We will assume unitary cost actions and note that the below proof can be generalised to account for general cost actions. We first deal with the case where $h=\hmax$ and $\mathcal{G} = \dlg{}$, where \dlg{} is the graph \slg{} but without delete edges. The proof generalises to \slg{} as an MPNN can learn to ignore delete edges.

  Let $x^{(u)} \in \R^3$ be the feature of node $u$. By definition of \dlg{} as the graph \slg{} (Def.~3.1) with no delete edges, it is defined by $x^{(u)}_0=1$ if $u$ corresponds to a proposition node, else $x^{(u)}_0=0$ when $u$ corresponds to an action node $a$. Furthermore, $x^{(u)}_1=1$ if $u$ is a proposition in the initial state and $x^{(u)}_2=1$ if $u$ is a goal proposition. Note that it is possible that $x^{(u)}_1=x^{(u)}_2=1$ when a proposition is both a goal condition and in the initial state. If not mentioned, we have that $x^{(u)}_i=0$ everywhere else. 
  
  Then we will construct a MPNN with $2L+2$ layers. For the first layer we have an embedding layer which ignores neighbourhood nodes with $\aggr^{(0)} = \vec{0}$ and $\phi^{(0)}(\h_u, \h_N) = f_{\text{emb}}(\h_u)$. Let $K$ be the finite set of possible node features in a \dlg{} representation of a planning task. Then $f_{\text{emb}}:K \to \R^3$ is defined by 

  \begin{align}
    f_{\text{emb}}([0,0,0]^{\top}) &= [0,0,0]^{\top} \label{eq:emb1}\\
    f_{\text{emb}}([1,0,0]^{\top}) &= [B,0,1]^{\top} \\
    f_{\text{emb}}([1,0,1]^{\top}) &= [B,1,1]^{\top} \\
    f_{\text{emb}}([1,1,0]^{\top}) &= [0,0,1]^{\top} \\
    f_{\text{emb}}([1,1,1]^{\top}) &= [0,1,1]^{\top}.\label{eq:emb5}
  \end{align}

  This first round of message passing updates corresponds to the initialisation step of the heuristic algorithm with $B$ representing infinity values. We also note that after applying $\aggr^{(0)}$ and $\phi^{(0)}$ and throughout the remaining forward pass of the MPNN, node embeddings will have the form $[x_0,x_1,x_2]$ which encode information about their corresponding proposition or action during the execution of the $\hmax$ algorithm where
  \begin{itemize}
    \item $x_0$ corresponds to the intermediate $h$ values computed in the $\hmax$ algorithm, 
    \item $x_1$ signifies whether the node corresponds to a goal node, and 
    \item $x_2$ determines if the node is a proposition or action node.
  \end{itemize}
  
  The next $2L$ layers use the component wise max aggregation function $\aggr=\max$ and alternates between setting $\phi^{(l)} (\h_u, \h_N) = f_a([\h_u \concat \h_N])$ and $\phi^{(l+1)}(\h_u, \h_N) = f_p([\h_u \concat \h_N])$ where $f_a:\R^6 \to \R^3$ and $f_p:\R^6 \to \R^3$ are defined by 
  \begin{align}
    f_a\lr{\begin{bmatrix}
      x_0\\ x_1\\ x_2\\ y_0 \\y_1 \\y_2
    \end{bmatrix}}
    &
    =\begin{bmatrix}
      x_0x_2 - (1-x_2)y_0 \\
      x_1x_2 \\
      x_2^2
    \end{bmatrix}, 
    \\
    f_p\lr{\begin{bmatrix}
      x_0\\ x_1\\ x_2\\ y_0 \\y_1 \\y_2
    \end{bmatrix}}
    &
    =\begin{bmatrix}
      \min(x_0, -y_0+1)x_2 \\
      x_1x_2 \\
      x_2^2
    \end{bmatrix}. \label{eq:encoding}
  \end{align}

  These functions correspond to the iterative updates of $h^{(l)}[a]$ and $h^{(l)}[p]$ in Alg.~\ref{alg:naivedp}, recalling that $L$ is the number of iterations it takes for the algorithm converges. More specifically, suppose we have a node $u$ with embedding $\h_u=[x_0,x_1,x_2]$ and aggregated embedding from its neighbours $\h_N=[y_0,y_1,y_2]$. Then we have two cases.
  \begin{itemize}
    \item If $x_2 = 0$, indicating that the node $u$ corresponds to a n action, then we get 
    \begin{align}
      f_a([\h_u \concat \h_N]) &= [-y_0, 0, 0] \label{eq:f_a_for_action}\\
      f_p([\h_u \concat \h_N]) &= [0, 0, 0]. \label{eq:f_p_for_action}
    \end{align}
    Eq.~\ref{eq:f_a_for_action} corresponds to Line~\ref{alg:line:ha} in Alg.~\ref{alg:naivedp} where $-y_0$ contains the negative of $h[a]$. We take the negative since we are restricted to using $\max$ aggregators only\footnote{As $\min$ aggregators conflict with ReLU activation functions commonly seen in neural networks.} which in turn means we require taking maximums of negatives in order to mimic the minimum aggregator later in Line~\ref{alg:line:hp} of the same algorithm. Eq.~\ref{eq:f_p_for_action} corresponds to Line~\ref{alg:line:hp} but since this line only affects propositions and $h[a]$ values do not need to be stored after execution of this line, we set $\h_u$ to zero.
    \item If $x_2 = 1$, indicating that the node $u$ corresponds to a proposition, then we get
    \begin{align}
      f_a([\h_u \concat \h_N]) &= [x_0, x_1, x_2] \label{eq:f_a_for_prop}\\
      f_p([\h_u \concat \h_N]) &= [\min(x_0, -y_0+1), x_1, x_2]. \label{eq:f_p_for_prop}
    \end{align}
    We recall $f_a$ corresponds to Line~\ref{alg:line:ha} which only affects $h[a]$ values. Given that we require storing $h[p]$ values throughout the whole algorithm, $f_a$ acts as the identity function on $\h_u$ for proposition nodes as seen in Eq.~\ref{eq:f_a_for_prop}. This is in contrast to $f_p$ which acts as the zero function on $\h_u$ for action nodes. Eq.~\ref{eq:f_p_for_prop} corresponds to Line~\ref{alg:line:hp} where $-y_0$ is equivalent to the $\min_{a \in A, p \in \add(a)} h[a] = \max_{a \in A, p \in \add(a)} -h[a]$ term by definition of \dlg{}, $\aggr$ and $f_a$ acting on action node embeddings.
  \end{itemize} 
  
  We append a final layer to the network where we ignore neighbourhood nodes with $\aggr^{(2L+1)} = \vec{0}$ and $\phi^{(2L+1)}([x_0,x_1,x_2]^{\top}, \h_N) = x_0x_1$. In combination with a max readout function $\Phi$, this corresponds to computing the final heuristic value. The above encoding of Alg.~\ref{alg:naivedp} has also been experimentally verified to be correct.

  In order to satisfy the neural network component of the MPNN, we replace the $\phi^{(i)}$ for $i=0,\ldots,2L+1$ with feedforward networks. Noting that we have finitely many layers we can choose small enough fractions of $\e$ for the universal approximation theorem for neural networks~\citep{hornik1989multilayer,cybenko1989approximation} to approximate the continuous functions $\phi^{(i)}$ whose domain is bounded in the ball of radius $B$ in order to achieve our result.

  The encoding for $\hsum$ is the same except we use a sum aggregator $\aggr = \sum$ and readout.

  For the case of the other \flg{}, we note that the $\hmax$ and $\hsum$ algorithm for FDR problems and hence \flg{} graph representations work in the obvious way by compiling FDR planning tasks into propositional STRIPS planning task by treating variable-value pairs in FDR problems as propositional facts.
\end{proof}

\begin{theorem}[MPNNs on grounded graphs are strictly more expressive than STRIPS-HGN]\label{thm:strips_hgn}
  Let $\Graph \in \set{\slg, \flg}$. Given any set of parameters $\Theta$ for a STRIPS-HGN model $\S_{\Theta}$, there is a set of parameters $\Phi$ for an MPNN $\F_{\Phi}$ such that for any pair of planning tasks $\Pi_1$ and $\Pi_2$ where $\S_{\Theta}(\Pi_1) \not= \S_{\Theta}(\Pi_2)$, we have $\F_{\Phi}(\Graph(\Pi_1)) \not= \F_{\Phi}(\Graph(\Pi_2))$. Furthermore, there exists a pair of planning problems $\Pi_1$ and $\Pi_2$ such that there exists $\Phi$ where $\F_{\Phi}(\Graph(\Pi_1)) \not= \F_{\Phi}(\Graph(\Pi_2))$ but $\S_{\Theta}(\Pi_1) = \S_{\Theta}(\Pi_2)$ for all $\Theta$.
\end{theorem}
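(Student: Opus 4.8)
The plan is to prove the two halves separately: first a \emph{simulation} result showing that MPNNs acting on either grounded graph refine the equivalence relation induced by STRIPS-HGN, and then a \emph{separation} result exhibiting a task pair witnessing strictness. I would carry out the argument in detail for $\Graph=\slg$; the $\flg$ case then follows by the same STRIPS-to-FDR compilation used in Theorem~\ref{thm:ground_h_add_max}, since an MPNN can recover variable--value facts from the $E_{\varval}$ edges and thereafter proceed identically.

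For the simulation direction, first I would observe that STRIPS-HGN operates on a hypergraph whose nodes are the propositions $P$ and whose hyperedges are the actions $A$, where each hyperedge records only $\pre(a)$ and $\add(a)$ (delete lists are discarded), together with an order-dependent incidence encoding truncated to a fixed maximum arity. The key structural observation is that the subgraph of $\slg(\Pi)$ induced by $E_{\pre}\cup E_{\add}$ is exactly the incidence (Levi) graph of this hypergraph: action nodes correspond to hyperedges, proposition nodes to hypergraph nodes, and the edge labels $\pre,\add$ record the role of each incidence. I would then construct $\Phi$ so that the MPNN simulates STRIPS-HGN round by round, using action-node embeddings to carry hyperedge (action) features and proposition-node embeddings to carry node (proposition) features; each STRIPS-HGN update round is realised by a constant number of message-passing layers, with the edge labels letting $f$ and $\comb$ treat precondition and add-effect neighbours differently exactly as STRIPS-HGN does, and the readout $\Phi$ reproducing STRIPS-HGN's final pooling. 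Because $f$, $\comb$, and the readout are arbitrary almost-everywhere-differentiable maps, they can be fixed (given $\Theta$) to match STRIPS-HGN's update functions, yielding $\F_{\Phi}(\slg(\Pi))=\S_{\Theta}(\Pi)$ for every $\Pi$, and hence $\S_{\Theta}(\Pi_1)\neq\S_{\Theta}(\Pi_2)\Rightarrow \F_{\Phi}(\slg(\Pi_1))\neq\F_{\Phi}(\slg(\Pi_2))$.

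The main obstacle is that STRIPS-HGN's hyperedge aggregation is \emph{not} permutation invariant (it orders and truncates neighbours to a fixed maximum arity), whereas MPNN aggregation is permutation invariant, so a naive layer-for-layer translation fails. I would resolve this by exploiting boundedness: over the finite alphabet of possible node feature vectors, the ordered, truncated sequence of neighbour embeddings that STRIPS-HGN consumes is a function of the \emph{multiset} of those embeddings, which a permutation-invariant $\aggr$ can encode losslessly (e.g.\ via an injective sum encoding) before $\comb$ recomputes STRIPS-HGN's canonical ordering and truncation internally. This is the delicate step, and it is precisely where STRIPS-HGN's discarding of information makes the refinement run in the claimed direction rather than against it.

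For the strictness direction, I would construct a pair $\Pi_1,\Pi_2$ identical in $P$, $s_0$, $G$, $\pre$, $\add$, and action costs, but differing in a single delete effect: one action deletes some proposition in $\Pi_1$ while having an empty delete list in $\Pi_2$, chosen so that $\add(a)\cap\del(a)=\emptyset$ holds in both. Since STRIPS-HGN's hypergraph ignores delete lists, $\Pi_1$ and $\Pi_2$ induce the same hypergraph, so $\S_{\Theta}(\Pi_1)=\S_{\Theta}(\Pi_2)$ for every $\Theta$. By contrast, $\slg(\Pi_1)$ and $\slg(\Pi_2)$ differ in $E_{\del}$: the affected proposition node has a $\del$-labelled incident edge in one graph and not the other. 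I would then choose $\Phi$ whose first message-passing layer simply counts $\del$-labelled neighbours, so the two graphs produce distinct node embeddings and hence, under a sum or max readout, distinct outputs $\F_{\Phi}(\slg(\Pi_1))\neq\F_{\Phi}(\slg(\Pi_2))$, completing the strict separation.
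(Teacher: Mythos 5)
Your separation half is essentially the paper's own argument: the paper likewise exploits the fact that $\S_{\Theta}(\Pi) = \S_{\Theta}(\Pi^+)$ for every $\Theta$, where $\Pi^+$ is the delete relaxation of $\Pi$, and then distinguishes the two graphs through their differing delete edges; that part of your proposal is sound. Your Levi-graph correspondence and round-by-round emulation also matches the paper's plan, which emulates each STRIPS-HGN hypergraph layer with two MPNN layers.

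The gap is in how you deal with the non-permutation-invariant aggregation, which you yourself flag as the delicate step. Your claim that the ordered, truncated sequence of neighbour embeddings consumed by STRIPS-HGN ``is a function of the multiset of those embeddings'' is false. STRIPS-HGN's ordering (and its truncation to a maximum arity) is induced by the arbitrary order in which propositions and actions appear in the input encoding, not by any canonical sort of the embedding vectors; indeed, if the ordered sequence were recoverable from the multiset alone, the aggregation would already be permutation invariant, contradicting precisely the property the paper says it lacks. Concretely, a neighbourhood multiset $\set{x,y}$ with $x \neq y$ may be consumed as $(x,y)$ or as $(y,x)$ depending on the input order, and STRIPS-HGN may produce different values in the two cases; an injective multiset encoding hands $\comb$ the multiset but gives it no way to decide which order STRIPS-HGN would have used. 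Consequently a strictly permutation-invariant MPNN computes a function of the isomorphism class of $\slg(\Pi)$, so it cannot realise $\F_{\Phi}(\slg(\Pi)) = \S_{\Theta}(\Pi)$ for all $\Pi$: two tasks with isomorphic \slg{}s but different input orderings can satisfy $\S_{\Theta}(\Pi_1) \neq \S_{\Theta}(\Pi_2)$ while your $\F_{\Phi}$ is forced to agree on them, which defeats the implication you must prove. The paper resolves this differently: since its definition of MPNNs only requires $\aggr$ to be ``usually'' permutation invariant, the constructed MPNN is simply allowed to order incoming messages exactly as STRIPS-HGN does. The paper also emulates STRIPS-HGN's global features, which are updated at every layer, by appending a virtual node connected to the whole graph with dedicated weights --- a component your simulation omits entirely.
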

\begin{proof}[Proof sketch]
  To show the first part of the theorem, we describe how to construct an MPNN $\F_{\Phi}$ acting on \slg{} and \flg{} corresponding to a given STRIPS-HGN~\cite{shen20stripshgn} model $\S_{\Theta}$ such that for any pair of planning tasks $\Pi_1$ and $\Pi_2$ where $\S_{\Theta}(\Pi_1) \not= \S_{\Theta}(\Pi_2)$, we have $\F_{\Phi}(\Graph(\Pi_1)) \not= \F_{\Phi}(\Graph(\Pi_2))$. First we note that each STRIPS-HGN hypergraph message passing layer can be emulated by two MPNN message passing layers. We note that the STRIPS-HGN aggregation function is not permutation invariant as it requires ordering the messages it receives before concatenating them and updating the aggregated feature. This can similarly be done for a MPNN. Another difference with STRIPS-HGN and MPNNs is the usage of global features that are updated with each message passing layers. The MPNN framework can also be extended to make use of global features, for example by appending a virtual node to the whole input graph, and using different weights for the message passing functions associated with the virtual node. Lastly, STRIPS-HGN uses the same weights for each message passing layer and this may also be done for an MPNN.

  For the second part of the theorem, we note that for any planning problem $\Pi$ by definition of STRIPS-HGN, $\S_{\Theta}(\Pi) = \S_{\Theta}(\Pi^+)$ for all parameters $\Theta$ where $\Pi^+$ is the delete relaxation of $\Pi$. Now consider the STRIPS problem $\Pi = \gen{P, A, s_0, G}$ with $P = G = \set{p_0, p_1}$, $s_0 = \set{p_0}$, and $A = \set{a_0, a_1}$ where both $a_0$ and $a_1$ have empty precondition and 
  \begin{align*}
    \add(a_0) &= \set{p_1}, &\del(a_0) &= \set{a_0} \\
    \add(a_1) &= \set{a_1}, &\del(a_1) &= \emptyset.
  \end{align*}
  Then the optimal plan for $\Pi$ has cost 2, while the optimal plan cost of its delete relaxation $\Pi^+$ is 1. There exists a set of parameters for an MPNN $\F_{\Phi}$ acting on $\Graph \in \set{\slg, \flg}$ such that $\F_{\Phi}(\Graph(\Pi)) = 2 \not= 1 = \F_{\Phi}(\Graph(\Pi^+))$.
\end{proof}

\begin{theorem}[MPNNs cannot learn $\hadd$ and $\hmax$ on lifted graphs]\label{thm:lifted_h_add_max}
  Let $h \in \set{\hadd, \hmax}$. There exists a pair of planning tasks $\Pi_1$ and $\Pi_2$ with $h(\Pi_1) \not= h(\Pi_2)$ such that for any set of parameters $\Theta$ for an MPNN we have $\F_{\Theta}(\llg(\Pi_1)) = \F_{\Theta}(\llg(\Pi_2))$.
\end{theorem}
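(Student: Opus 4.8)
The plan is to reduce the statement to a Weisfeiler--Leman (WL) indistinguishability argument and then exhibit an explicit gadget. Recall the well-known bound that MPNNs are no more discriminating than the (relational) $1$-dimensional WL colour refinement procedure~\cite{cai1992optimal,barcelo2020logical}: if two edge-labelled featured graphs have the same multiset of WL-stable node colours, then for every parameter set $\Theta$ the MPNN produces identical node-embedding multisets, and hence any permutation-invariant readout makes the two graph-level outputs of $\F_\Theta$ coincide. So it suffices to build two lifted tasks $\Pi_1,\Pi_2$ whose \llg{}s share the same WL-stable colour histogram while $h(\Pi_1)\neq h(\Pi_2)$.

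For the gadget I would fix a single domain and vary only the initial state. Take a binary predicate $R$, a nullary predicate $g$, goal $G=\set{g}$, and one ternary action schema $T$ with
\begin{align*}
\pre(T)=\set{R(x,y),R(y,z),R(z,x)},\quad
\add(T)=\set{g},\quad
\del(T)=\emptyset .
\end{align*}
Over six objects, let $\Pi_1$ have $s_0$ equal to the six $R$-atoms of a single $6$-cycle $R(o_1,o_2),\dots,R(o_6,o_1)$, and let $\Pi_2$ have $s_0$ equal to the six $R$-atoms of two disjoint triangles $R(o_1,o_2),R(o_2,o_3),R(o_3,o_1)$ and $R(o_4,o_5),R(o_5,o_6),R(o_6,o_4)$. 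Since $T$ is applicable exactly when a triangle is present, $g$ is reachable in the delete relaxation of $\Pi_2$ (in one step) but not of $\Pi_1$, so $\hmax(\Pi_2)=\hadd(\Pi_2)=1$ while $\hmax(\Pi_1)=\hadd(\Pi_1)=\infty$; adding an always-applicable but costlier achiever of $g$ turns these into two distinct finite values if one prefers.

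The key verification is that $\llg(\Pi_1)$ and $\llg(\Pi_2)$ are WL-indistinguishable. The schema component ($N(\actionschema)$ together with the $E_{\pre},E_{\add},E_{\del}$ edges) is identical in both tasks, since it depends only on the domain; the object set, the predicate nodes, the complete object--predicate edges $E_\nu$, and the nullary goal node are all identical; and each task contributes six $R$-proposition nodes, each with argument nodes carrying the fixed features $\pe(1)$ and $\pe(2)$. The only difference is the $E_\gamma$-wiring of argument nodes to objects. In both the $6$-cycle and the two triangles every object occurs exactly once as the first and once as the second argument of an $R$-atom, so after one refinement round all object nodes share one colour, all first-argument nodes share another, all second-argument nodes another, and all $R$-proposition nodes another, with identical neighbour-colour multisets across the two graphs. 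Refinement therefore stabilises to the same colour histogram: $1$-WL cannot see that the atoms chain into one long cycle rather than two short ones, because that distinction requires telling apart objects that all receive the same colour. Hence $\F_\Theta(\llg(\Pi_1))=\F_\Theta(\llg(\Pi_2))$ for every $\Theta$, while $h(\Pi_1)\neq h(\Pi_2)$.

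The main obstacle is precisely making the heuristic-relevant difference invisible to WL. A reachability-style gap driven by a distinguished source/target would be detectable by colour refinement, since the source's stable colour eventually encodes whether the target lies in its connected component, and such a pair would then separate under MPNNs. The construction sidesteps this by encoding the gap as a genuinely non-local, refinement-invariant property---existence of a triangle, equivalently the cycle/connectivity structure---while keeping the initial state and goal vertex-transitive so that no symmetry-breaking marker is introduced. The remaining care is to confirm that later refinement rounds do not diverge (they do not, as all neighbour-colour multisets already agree after the first round) and that the argument respects edge labels, i.e.\ that relational $1$-WL with per-label aggregation still collapses both graphs to the same colouring.
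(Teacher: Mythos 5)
Your proposal is correct and takes essentially the same approach as the paper's proof: both reduce the claim to colour-refinement (1-WL) indistinguishability via the contrapositive of \cite[Lem.~2]{xu2018powerful}, and both exhibit two lifted tasks over the same domain that differ only in the cycle structure of the initial state's atoms, making one task solvable and the other unsolvable while their \llg{}s receive identical stable colourings. The paper's gadget is simply a more compact instance of the same idea --- two objects with $s_0^{(1)} = \set{Q(o_1,o_2), Q(o_2,o_1)}$ versus $s_0^{(2)} = \set{Q(o_1,o_1), Q(o_2,o_2)}$ and a binary schema, in place of your six-object $C_6$ versus $2C_3$ construction with a ternary schema.
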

\input{figures/hadd_hmax.tex}
\begin{proof}
  Consider the two (delete free) lifted problems $P_1 = \langle\predicates, \objects, \actionschema, s_0^{(1)}, G\rangle$ and $P_2 = \langle\predicates, \objects, \actionschema, s_0^{(2)}, G\rangle$ with $\predicates = \set{Q(x_1, x_2), W(x_1, x_2)}$, $\objects = \set{o_1,o_2}$, $s_0^{(1)} = \set{Q(o_1,o_2), Q(o_2,o_1)}$, $s_0^{(2)} = \set{Q(o_1,o_1), Q(o_2,o_2)}$, $G = \set{{W(o_1,o_2), W(o_2,o_1)}}$ and one action schema $\actionschema = \set{a}$ with $\Delta(a) = \set{\d_1, \d_2}$, $\pre(a) = \set{Q(\d_1,\d_2)}$, $\add(a) = \set{W(\d_1,\d_2)}$ and $\del(a) = \emptyset$. 
  
  By definition $P_1$ can be solved with a plan consisting of $a(o_1, o_2)$ and $a(o_2, o_1)$ in either order and the corresponding heuristic values are $\hmax(s_0^{(1)})=\hsum(s_0^{(1)})=1$. On the other hand $P_2$ is unsolvable in which case we have $\hmax(s_0^{(2)})=\hsum(s_0^{(2)})=\infty$. 

  The graphs are indistinguishable by the WL algorithm where we colour nodes by mapping their features to the set of natural numbers for the $\llg{}$ graphs, given that the set of possible node features is countable. Note that it is possible to extend the WL algorithm to deal with edge labelled graphs by replacing each labelled edge with a coloured node connected to the edge's endpoints. Fig.~\ref{fig:lifted_counterexample} illustrates the graph representations for \llg{}. Then the result follows by the contrapositive of \cite[Lem.~2]{xu2018powerful} as WL assigns the same output for both graphs, and hence any MPNN also assigns the same output.
\end{proof}

\begin{theorem}[MPNNs cannot learn $\hplus$ or $\hopt$ with our graphs]\label{thm:h_plus_opt}
  Let $h \in \set{\hplus, \hopt}$ and $\Graph \in \set{\slg, \flg, \llg}$. There exists a pair of planning tasks $\Pi_1$ and $\Pi_2$ with $h(\Pi_1) \not= h(\Pi_2)$ such that for any set of parameters $\Theta$ for an MPNN we have $\F_{\Theta}(\Graph(\Pi_1)) = \F_{\Theta}(\Graph(\Pi_2))$.
\end{theorem}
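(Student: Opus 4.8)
The plan is to reduce the statement, exactly as in the proof of Thm~\ref{thm:lifted_h_add_max}, to the expressive power of colour refinement: by the contrapositive of \cite[Lem.~2]{xu2018powerful}, if the WL algorithm produces the same colour histogram on $\Graph(\Pi_1)$ and $\Graph(\Pi_2)$ then every MPNN satisfies $\F_\Theta(\Graph(\Pi_1)) = \F_\Theta(\Graph(\Pi_2))$. So it suffices to exhibit, for each $\Graph$, a pair of tasks with distinct $h$-value whose $\Graph$-images are WL-indistinguishable. I would keep both tasks \emph{delete-free}, so that $\Pi=\Pi^+$ and hence $\hplus=\hopt$; one pair then settles both $h\in\set{\hplus,\hopt}$ at once.

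For the lifted graph I would simply reuse the pair $P_1,P_2$ of Thm~\ref{thm:lifted_h_add_max}: those tasks are delete-free, $P_1$ is solvable while $P_2$ is unsolvable, so $\hplus(P_1)=\hopt(P_1)<\infty=\hplus(P_2)=\hopt(P_2)$, and their $\llg$ images were already shown to be WL-indistinguishable. This pair is of no use for the grounded graphs: by Thm~\ref{thm:ground_h_add_max} an MPNN on $\slg$ or $\flg$ can compute $\hmax$, which differs on $P_1,P_2$, so those graphs must separate the pair. For $\Graph\in\set{\slg,\flg}$ I therefore need a genuinely WL-blind construction, and the idea is to encode \textsc{Minimum Vertex Cover} of a graph $H=(V_H,E_H)$ as a delete-free task $\Pi_H$: introduce one proposition $\mathrm{cov}(e)$ per edge (all placed in $G$, none in $s_0$) and one unit-cost action $\mathrm{sel}(v)$ per vertex with empty precondition, no delete list, and $\add(\mathrm{sel}(v))=\set{\mathrm{cov}(e)\mid v\in e}$. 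A delete-free plan is exactly a vertex cover of $H$, so $\hplus(\Pi_H)=\hopt(\Pi_H)=\tau(H)$, the vertex-cover number.

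Now I would take the classic WL-blind pair $H_1=C_6$ (a single $6$-cycle) and $H_2=2\,C_3$ (two disjoint triangles): both are $2$-regular with $6$ vertices and $6$ edges, so $1$-WL gives them the same uniform stable colouring, yet $\tau(C_6)=3\neq 4=\tau(2C_3)$. The key observation is that $\slg(\Pi_H)$ is precisely the \emph{incidence graph} of $H$, with uniform node features (all action nodes alike, all proposition nodes alike) and the single edge label $\add$; hence $\slg(\Pi_{C_6})$ is $C_{12}$ and $\slg(\Pi_{2C_3})$ is $2\,C_6$, which stay WL-indistinguishable (identical periodic alternation of action/proposition colours, unchanged after the edge-label node expansion). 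The same incidence symmetry is inherited by $\flg$ through the STRIPS-to-FDR compilation, since each independent proposition becomes its own binary variable and the only structural difference between the two tasks is the WL-invisible choice of $C_6$ versus $2C_3$.

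The main obstacle is verifying that this blindness survives intact in \emph{every} target representation, not merely in $\slg$: I must check that attaching the fixed local gadget (the value nodes and the $\varval$/$\pre$/$\eff$ layering of $\flg$) does not refine the equitable partition and thereby expose the difference between $C_6$ and $2C_3$. Conceptually the difficulty is the expected one and guarantees success — $\hplus$ is $\NP$-complete and can encode vertex cover, whereas MPNN/WL power is bounded by graph isomorphism, so a blind-but-different pair must exist; the genuine work is pinning down one explicit pair and carrying out the colour-refinement computation gadget-by-gadget, which I expect to be routine but tedious for $\flg$ and which is sidestepped for $\llg$ by reusing the Thm~\ref{thm:lifted_h_add_max} pair.
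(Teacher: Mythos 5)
Your proposal is correct, and its skeleton is the same as the paper's: restrict to delete-free tasks so that $\hplus=\hopt$ and one pair settles both heuristics, exhibit a pair whose graph representations colour refinement cannot distinguish, and conclude via the contrapositive of \cite[Lem.~2]{xu2018powerful}. Where you differ is in the witnesses. The paper uses a \emph{single} hand-crafted propositional pair ($P=\set{p_1,p_2,g_3,g_4}$, six unit-cost actions each, optimal costs $4$ vs.\ $3$) and asserts WL-indistinguishability of its $\slg$, $\flg$ and $\llg$ representations all at once, so no case split over $\Graph$ is needed; note that this pair is essentially the multigraph version of your cycle trick (two parallel doubled edges vs.\ a $4$-cycle between the $p_i$ and $g_j$ layers), and it reappears as the $n=2$ base case of Thm.~\ref{thm:approx}. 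You instead split into two cases: for $\llg$ you recycle the pair of Thm.~\ref{thm:lifted_h_add_max} (fine, though one task is unsolvable, so you rely on accepting $h=\infty$ as a value --- the paper's pair keeps both values finite, which is cleaner and is what lets the same family scale into the inapproximability result), and for the grounded graphs you build a vertex-cover reduction with $C_6$ vs.\ $2C_3$, whose $\slg$s are the incidence graphs $C_{12}$ vs.\ $2C_6$. That framing buys something the paper's proof does not make explicit: a systematic recipe (any 1-WL-blind pair of graphs with different cover numbers yields a counterexample) and a transparent link to the $\NP$-hardness intuition; the cost is the extra, admittedly routine, gadget-by-gadget check that the $\flg$ compilation (pendant variable and false-value nodes) preserves the equitable partition, which you flag but do not fully carry out --- comparable in rigour to the paper's own one-line assertion. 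One small caveat: your argument that the lifted pair \emph{must} be separated by $\slg$/$\flg$ via Thm.~\ref{thm:ground_h_add_max} is loose, since that theorem only applies when $h$ is finite and bounded by $B$; the conclusion is nonetheless true (in the grounded graphs the initial-state propositions connect to goal atoms in $P_1$ but not in $P_2$, so one WL round separates them), and it is only motivation, not a load-bearing step.
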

\newcommand\sizeforhopt{0.75cm}
\newcommand\yshifta{1.6}
\newcommand\yshiftb{0.4}
\newcommand\xshift{1}
\newcommand\textsize{\scriptsize}
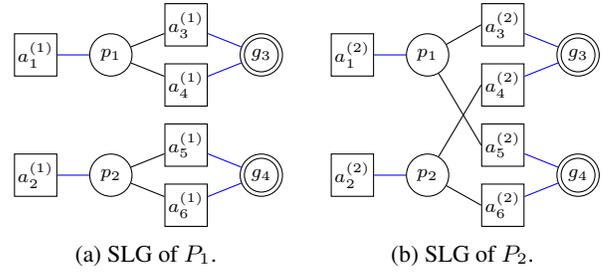
\begin{figure}[t]
  \centering
  \begin{subfigure}{0.49\columnwidth}
    \centering
    \begin{tikzpicture}[square/.style={regular polygon,regular polygon sides=4}]
      \node[shape=rectangle,draw=black,minimum size=0.75*\sizeforhopt] (A1) at (0,0) {};
      \node[shape=rectangle,draw=black,minimum size=0.75*\sizeforhopt] (A2) at ([shift={(0,-\yshifta)}]A1) {};
      \node[shape=circle,draw=black,minimum size=0.75*\sizeforhopt] (P1) at ([shift={(\xshift,0)}]A1) {};
      \node[shape=circle,draw=black,minimum size=0.75*\sizeforhopt] (P2) at ([shift={(\xshift,0)}]A2) {};
      \node[shape=rectangle,draw=black,minimum size=0.75*\sizeforhopt] (A3) at ([shift={(\xshift,+\yshiftb)}]P1) {};
      \node[shape=rectangle,draw=black,minimum size=0.75*\sizeforhopt] (A4) at ([shift={(\xshift,-\yshiftb)}]P1) {};
      \node[shape=rectangle,draw=black,minimum size=0.75*\sizeforhopt] (A5) at ([shift={(\xshift,+\yshiftb)}]P2) {};
      \node[shape=rectangle,draw=black,minimum size=0.75*\sizeforhopt] (A6) at ([shift={(\xshift,-\yshiftb)}]P2) {};
      \node[shape=circle,draw=black,minimum size=0.75*\sizeforhopt] (G3) at ([shift={(2*\xshift,0)}]P1) {};
      \node[shape=circle,draw=black,minimum size=0.75*\sizeforhopt] (G4) at ([shift={(2*\xshift,0)}]P2) {};
      \node[shape=circle,draw=black,minimum size=0.6*\sizeforhopt] (G33) at (G3) {};
      \node[shape=circle,draw=black,minimum size=0.6*\sizeforhopt] (G44) at (G4) {};

      \node at (A1) {\textsize$a_1^{(1)}$};
      \node at (A2) {\textsize$a_2^{(1)}$};
      \node at (P1) {\textsize$p_1$};
      \node at (P2) {\textsize$p_2$};
      \node at (A3) {\textsize$a_3^{(1)}$};
      \node at (A4) {\textsize$a_4^{(1)}$};
      \node at (A5) {\textsize$a_5^{(1)}$};
      \node at (A6) {\textsize$a_6^{(1)}$};
      \node at (G3) {\textsize$g_3$};
      \node at (G4) {\textsize$g_4$};

      \path [-,draw=blue] (A1) edge (P1);

      \path [-,draw=blue] (A2) edge (P2);

      \path [-] (P1) edge (A3);

      \path [-] (P1) edge (A4);
      
      \path [-] (P2) edge (A5);
      
      \path [-] (P2) edge (A6);
      
      \path [-,draw=blue] (A3) edge (G3);
      
      \path [-,draw=blue] (A4) edge (G3);

      \path [-,draw=blue] (A5) edge (G4);

      \path [-,draw=blue] (A6) edge (G4);
    \end{tikzpicture}
    \caption{\slg{} of $P_1$.}
  \end{subfigure}
  \begin{subfigure}{0.49\columnwidth}
    \centering
    \begin{tikzpicture}[square/.style={regular polygon,regular polygon sides=4}]
      \node[shape=rectangle,draw=black,minimum size=0.75*\sizeforhopt] (A1) at (0,0) {};
      \node[shape=rectangle,draw=black,minimum size=0.75*\sizeforhopt] (A2) at ([shift={(0,-\yshifta)}]A1) {};
      \node[shape=circle,draw=black,minimum size=0.75*\sizeforhopt] (P1) at ([shift={(\xshift,0)}]A1) {};
      \node[shape=circle,draw=black,minimum size=0.75*\sizeforhopt] (P2) at ([shift={(\xshift,0)}]A2) {};
      \node[shape=rectangle,draw=black,minimum size=0.75*\sizeforhopt] (A3) at ([shift={(\xshift,+\yshiftb)}]P1) {};
      \node[shape=rectangle,draw=black,minimum size=0.75*\sizeforhopt] (A4) at ([shift={(\xshift,-\yshiftb)}]P1) {};
      \node[shape=rectangle,draw=black,minimum size=0.75*\sizeforhopt] (A5) at ([shift={(\xshift,+\yshiftb)}]P2) {};
      \node[shape=rectangle,draw=black,minimum size=0.75*\sizeforhopt] (A6) at ([shift={(\xshift,-\yshiftb)}]P2) {};
      \node[shape=circle,draw=black,minimum size=0.75*\sizeforhopt] (G3) at ([shift={(2*\xshift,0)}]P1) {};
      \node[shape=circle,draw=black,minimum size=0.75*\sizeforhopt] (G4) at ([shift={(2*\xshift,0)}]P2) {};
      \node[shape=circle,draw=black,minimum size=0.6*\sizeforhopt] (G33) at (G3) {};
      \node[shape=circle,draw=black,minimum size=0.6*\sizeforhopt] (G44) at (G4) {};

      \node at (A1) {\textsize$a_1^{(2)}$};
      \node at (A2) {\textsize$a_2^{(2)}$};
      \node at (P1) {\textsize$p_1$};
      \node at (P2) {\textsize$p_2$};
      \node at (A3) {\textsize$a_3^{(2)}$};
      \node at (A4) {\textsize$a_4^{(2)}$};
      \node at (A5) {\textsize$a_5^{(2)}$};
      \node at (A6) {\textsize$a_6^{(2)}$};
      \node at (G3) {\textsize$g_3$};
      \node at (G4) {\textsize$g_4$};

      \path [-,draw=blue] (A1) edge (P1);

      \path [-,draw=blue] (A2) edge (P2);

      \path [-] (P1) edge (A3.west);

      \path [-] (P1) edge (A5.west);
      
      \path [-] (P2) edge (A4.west);
      
      \path [-] (P2) edge (A6.west);
      
      \path [-,draw=blue] (A3) edge (G3);
      
      \path [-,draw=blue] (A4) edge (G3);

      \path [-,draw=blue] (A5) edge (G4);

      \path [-,draw=blue] (A6) edge (G4);
    \end{tikzpicture}
    \caption{\slg{} of $P_2$.}
  \end{subfigure}
  \caption{\slg{} of problems used in the proof of Thm.~\ref{thm:h_plus_opt}. Black edges indicate preconditions and blue edges indicate add effects.}
  \label{fig:hopt}
\end{figure}
\begin{proof}
  Consider the two (delete free) planning problems $P_1 = \gen{P, A_1, s_0, G}$ and $P_2 = \gen{P, A_2, s_0, G}$ with $P=\set{p_1, p_2, g_3, g_4}$, $G=\set{g_3, g_4}$, $s_0 = \emptyset$ and action sets $A_1 = \seta{a_i^{(1)} \mid i=1,\ldots,6}$, $A_2 = \seta{a_i^{(2)} \mid i=1,\ldots,6}$ where actions have no delete effects and are defined by
{
  \small
  \begin{align*}
    \pre(a_1^{(1)}) &= \emptyset, &\add(a_1^{(1)}) &= \seta{p_1}, \\ 
    \pre(a_1^{(2)}) &= \emptyset, &\add(a_1^{(2)}) &= \seta{p_1}, \\
    \pre(a_2^{(1)}) &= \emptyset, &\add(a_2^{(1)}) &= \seta{p_2}, \\ 
    \pre(a_2^{(2)}) &= \emptyset, &\add(a_2^{(2)}) &= \seta{p_2}, \\
    \pre(a_3^{(1)}) &= \seta{p_1}, &\add(a_3^{(1)}) &= \seta{g_3}, \\ 
    \pre(a_3^{(2)}) &= \seta{p_1}, &\add(a_3^{(2)}) &= \seta{g_3}, \\
    \pre(a_4^{(1)}) &= \seta{p_1}, &\add(a_4^{(1)}) &= \seta{g_3}, \\ 
    \pre(a_4^{(2)}) &= \seta{p_1}, &\add(a_4^{(2)}) &= \seta{g_4}, \\
    \pre(a_5^{(1)}) &= \seta{p_2}, &\add(a_5^{(1)}) &= \seta{g_4}, \\ 
    \pre(a_5^{(2)}) &= \seta{p_2}, &\add(a_5^{(2)}) &= \seta{g_3}, \\
    \pre(a_6^{(1)}) &= \seta{p_2}, &\add(a_6^{(1)}) &= \seta{g_4}, \\ 
    \pre(a_6^{(2)}) &= \seta{p_2}, &\add(a_6^{(2)}) &= \seta{g_4}.
   \end{align*}
}

  We have that the minimum plan cost for $P_1$ is 4 by applying actions $a_1^{(1)}, a_2^{(1)}, a_3^{(1)}, a_5^{(1)}$ whereas the minimum plan cost for $P_2$ is 3 with actions $a_1^{(1)}, a_3^{(1)}, a_5^{(1)}$, as seen in Fig.~\ref{fig:hopt}. Both $\hplus$ and $\hopt$ return 4 for $P_1$ and 3 for $P_2$.

  Colour refinement assigns the same invariant to the graph representations of $P_1$ and $P_2$ and thus by the contrapositive of \cite[Lem.~2]{xu2018powerful}, any MPNN assigns the same embedding to both graphs. 
\end{proof}

\begin{theorem}[MPNNs cannot learn any approximation of $h^*$]\label{thm:approx}
  Let $\Graph \in \set{\slg, \flg, \llg}$ and $c > 0$. There exists a pair of planning tasks $\Pi_1$ and $\Pi_2$ with $h(\Pi_1) \not= h(\Pi_2)$ such that for any set of parameters $\Theta$ for an MPNN we do not have $\abs{\F_{\Theta}(\Graph(\Pi_1)) - h(\Pi_1)}\leq c \wedge \abs{\F_{\Theta}(\Graph(\Pi_2)) - h(\Pi_2)}\leq c$. Furthermore, for any set of parameters we do not have $\abs{1 - \frac{\F_{\Theta}(\Graph(\Pi_1))}{h(\Pi_1)}}\leq c \wedge \abs{1 - \frac{\F_{\Theta}(\Graph(\Pi_2))}{h(\Pi_2)}}\leq c$.
\end{theorem}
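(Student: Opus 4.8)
The plan is to reduce both claims to the mechanism of Theorem~\ref{thm:h_plus_opt}: exhibit, for each $c>0$, a family of pairs $\lr{\Pi_1^{(k)}, \Pi_2^{(k)}}$ whose graph encodings are indistinguishable by colour refinement, so that by the contrapositive of \cite[Lem.~2]{xu2018powerful} every MPNN returns a single common value $v := \F_{\Theta}(\Graph(\Pi_1^{(k)})) = \F_{\Theta}(\Graph(\Pi_2^{(k)}))$, while $h(\Pi_1^{(k)})$ and $h(\Pi_2^{(k)})$ are driven arbitrarily far apart. One value cannot be close to two distant targets: if $\abs{v - h_1} \le c$ and $\abs{v - h_2} \le c$ then $\abs{h_1 - h_2} \le 2c$, and if $\abs{1 - v/h_1} \le c$ and $\abs{1 - v/h_2} \le c$ with $h_1 > h_2 > 0$ then $h_1/h_2 \le (1+c)/(1-c)$. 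So it suffices to make the additive gap exceed $2c$ for the first part, and the ratio exceed $(1+c)/(1-c)$ for the second.

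For the absolute bound I would take $\Pi_i^{(k)}$ to be the disjoint union of $k$ independent copies of the gadget $P_i$ from Theorem~\ref{thm:h_plus_opt}, with fresh propositions, actions and goal atoms in each copy. Since the copies share nothing and the goal is their union, both $\hplus$ and $\hopt$ decompose additively, giving $h(\Pi_1^{(k)}) = 4k$ and $h(\Pi_2^{(k)}) = 3k$, a gap of $k$. The stable colour of a node depends only on its connected component, so the colour multiset of a $k$-fold disjoint union is that of the base scaled by $k$; as $P_1$ and $P_2$ already agree on colours, $\Graph(\Pi_1^{(k)})$ and $\Graph(\Pi_2^{(k)})$ remain colour-refinement equivalent for every $\Graph \in \set{\slg, \flg, \llg}$. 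Choosing any $k > 2c$ then defeats all additive $c$-approximations.

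Disjoint copies pin the ratio at $4/3$, so the relative bound needs multiplicative rather than additive growth. Here I would generalise the gadget into a layered family whose two members have identical local structure, namely matched action precondition and add-effect degrees and matched node features at every layer, but differ in how goal achievement is shared: in $\Pi_1^{(k)}$ the cheap achiever of the goal depends on a resource reachable only through a depth-$k$ chain, whereas in $\Pi_2^{(k)}$ the analogous achiever reaches its resource in $O(1)$ steps. Balancing degrees so that this difference in reachability depth is invisible to $1$-WL, in the spirit of classical regular WL-indistinguishable graphs, would yield $h(\Pi_1^{(k)}) = \Theta(k)$ and $h(\Pi_2^{(k)}) = O(1)$, hence an unbounded ratio; given $c$ one then selects $k$ large enough that $h(\Pi_1^{(k)})/h(\Pi_2^{(k)}) > (1+c)/(1-c)$.

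The main obstacle is the relative case: engineering a pair that is simultaneously colour-refinement equivalent in all three encodings \slg, \flg{} and \llg, and that forces a divergent $\hplus/\hopt$ ratio. The delicate point is that differences in reachability depth are exactly what message passing detects unless the one-hop neighbourhoods and feature multisets coincide everywhere, so the construction must hide the global difference in \emph{sharing} and \emph{depth} behind identical local structure; verifying that the stable partitions agree and recomputing the heuristic values (in particular for the lifted \llg{} encoding) is where the real work lies, whereas the closing arithmetic on the common value $v$ is immediate.
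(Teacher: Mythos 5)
Your handling of the absolute bound is correct, and it takes a genuinely different route from the paper's. The paper does not amplify the gadget of Thm.~\ref{thm:h_plus_opt}; instead it constructs, for each $n>2$, a fresh pair $(P_1,P_2)$ of delete-free tasks with $\hplus=\hopt$ equal to $n^2$ and $2n-1$, and uses that single parametric family for \emph{both} halves of the theorem. Your disjoint-union amplification (stable colours are component-local, so equal base colour histograms give equal histograms after taking $k$ copies; $\hplus$ and $\hopt$ add over independent copies; choose $k>2c$) is sound and arguably more modular, but, as you correctly observe, it pins the ratio at $4/3$ and therefore can never establish the relative claim.

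The relative claim is exactly where your proposal stops short, and this is a genuine gap rather than a routine verification: you exhibit no construction, and the direction you sketch points the wrong way. Making a $\Theta(k)$-deep chain in $\Pi_1$ versus an $O(1)$ reach in $\Pi_2$ invisible to colour refinement is what your plan would require, and, as you yourself concede, depth asymmetries of that kind surface in iterated degree sequences; no padding scheme is given and none is obvious. The paper's construction removes the depth asymmetry instead of hiding it. Both tasks contain the same $n$ chains of length $n-1$ (actions $a(x,y)$ achieving $p(x,y)$ from $p(x-1,y)$) and the same $n^2$ final-layer actions; the \emph{only} difference is which chain those final actions' preconditions point at. In $P_1$, $\pre(a_1(y,z))=\set{p(n-1,y)}$ and $\add(a_1(y,z))=\set{p(n,y)}$, so every goal $p(n,y)$ forces its own chain and $\hopt = n(n-1)+n = n^2$; in $P_2$, $\pre(a_2(y,z))=\set{p(n-1,z)}$ and $\add(a_2(y,z))=\set{p(n,y)}$, so all $n$ goals can be served from the single chain $z=1$ and $\hopt = (n-1)+n = 2n-1$. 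Every node has identical features and identical degree under every edge label in the two graphs --- the wiring is permuted, the depth is not --- so colour refinement, and hence any MPNN by \cite[Lem.~2]{xu2018powerful}, assigns them the same value, while $n^2/(2n-1)\to\infty$. That ``permute the sharing, keep every degree'' idea is what is missing from your sketch. (A shared caveat: your closing arithmetic and the paper's both implicitly require $c<1$ in the relative case, since an MPNN outputting the constant $0$ satisfies $\abs{1-0/h}=1\leq c$ whenever $c\geq 1$.)
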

\input{figures/hopt_approx.tex}
\begin{proof}
  Let us fix $n \in \N$ with $n>2$. Then we will construct a pair of planning problems whose optimal plan costs are $2n-1$ and $n^2$ respectively but are indistinguishable by MPNNs by any graph representations $\Graph \in \set{\slg, \flg, \llg}$ of the problems. Thus, we can make our absolute and relative errors, given by $n^2 - 2n + 1$ and $\frac{n^2}{2n-1}$ respectively, arbitrary large.

  Consider the two (delete free) planning problems given by $P_1 = \gen{P,A_1,s_0,G}$ and $P_2=\gen{O,A_2,s_0,G}$ with $P=\set{p(x,y)\mid x,y \in [n]}$, $G = \set{p(n,y)\mid y \in [n]} \subset P$, $s_0 = \emptyset$ and actions $A_1 = \set{a_1(y,z) \mid y,z \in [n]} \cup A$ and $A_2 = \set{a_2(y,z) \mid y,z \in [n]} \cup A$ where $A = \set{a(x,y) \mid x \in [n-1], y \in [n]}$. All actions have no delete effects and their preconditions and add effects are given as follows
{  
  \small
  \begin{align*}
    \pre(a(1,y)) &= \emptyset, &\add(a(1,y)) &= \set{p(1,y)}, \tag*{$\forall y \in [n]$} \\
    \pre(a(x,y)) &= \set{p(x-1,y)}, &\add(a(x,y)) &= \set{p(x,y)}, \tag*{$\forall x \in[2..n-1], y \in [n]$} \\
    \pre(a_1(y,z)) &= \set{p(n-1,y)}, &\add(a_1(y,z)) &= \set{p(n,y)}, \tag*{$\forall y,z \in [n]$} \\
    \pre(a_2(y,z)) &= \set{p(n-1,z)}, &\add(a_2(y,z)) &= \set{p(n,y)}, \tag*{$\forall y,z \in [n]$}
  \end{align*}
}
  where we note that the case $n=2$ is given in the proof of Thm.~\ref{thm:h_plus_opt}. We refer to Fig.~\ref{fig:large_counterexample} for the case of $n=3$. An optimal plan for $P_1$ consists of executing all actions $a \in A$ and $a_1(y,1)$ for $y \in [n]$. On the other hand, an optimal plan for $P_2$ consists only of executing $a(x,1)$ for $x \in [n-1]$ followed by $a_2(y,1)$ for all $y \in [n]$. Thus, the optimal plan costs for $P_1$ and $P_2$ are $n^2$ and $2n-1$ respectively. 
  
  As in the previous proof, any graph representations of the pair of problems for any $n$ are indistinguishable by colour refinement and hence by MPNNs~\cite[Lem.~2]{xu2018powerful}.
\end{proof}

\end{document}